\typeout{IJCAI--23 Instructions for Authors}

\newif\ifdraft
\drafttrue
\draftfalse

\newif\ifarxiv
\arxivtrue
\newcommand{\iflong}[2]{\ifarxiv{}#1\else{}#2\fi}

\documentclass{article}
\pdfpagewidth=8.5in
\pdfpageheight=11in

\usepackage{ijcai23}

\usepackage{times}
\usepackage{soul}
\usepackage{url}
\usepackage[hidelinks]{hyperref}
\usepackage[utf8]{inputenc}
\usepackage[small]{caption}
\usepackage{graphicx}
\usepackage{amsmath}
\usepackage{amsthm}
\usepackage{bbold}
\usepackage{booktabs}
\usepackage{algorithm}
\usepackage{algorithmic}
\usepackage[switch]{lineno}
\usepackage{tikz}
\usepackage{siunitx}

\usepackage{amsmath,amssymb}
\usepackage{ifthen}

\usepackage{bm}

\usepackage{subcaption}
\usepackage{listings}

\lstset{basicstyle=\ttfamily,breaklines=true,basewidth=0.5em}
\lstset{framextopmargin=1pt,frame=lines}

\usepackage{letltxmacro}
\newcommand*{\SavedLstInline}{}
\LetLtxMacro\SavedLstInline\lstinline
\DeclareRobustCommand*{\lstinline}{
  \ifmmode
    \let\SavedBGroup\bgroup
    \def\bgroup{
      \let\bgroup\SavedBGroup
      \hbox\bgroup
    }
  \fi
  \SavedLstInline
}
\newcommand{\code}[1]{\texttt{\lstinline[mathescape,classoffset=1,keywordstyle=\color{black},basicstyle=\color{black},classoffset=0,keywordstyle=\color{black}]{#1}}}

\usepackage[capitalize]{cleveref}
\crefname{listing}{Algorithm}{Algorithms}
\Crefname{listing}{Algorithm}{Algorithms}

\definecolor{darkred}{rgb}{.5,0,0}
\definecolor{darkgreen}{rgb}{0,.5,0}
\definecolor{darkblue}{rgb}{0,0,.5}
\definecolor{darkorange}{rgb}{.8,.4,0}
\ifdraft
\newcommand{\levi}[1]{\textcolor{darkgreen}{\emph{(LL: #1)}}}
\newcommand{\mh}[1]{\textcolor{darkblue}{\emph{(MH: #1)}}}
\newcommand{\todo}[1]{\textcolor{darkorange}{(\emph{TODO: #1})}}
\newcommand{\comment}[1]{\textcolor{gray}{(\emph{#1})}}
\newcommand{\warning}[1]{\textcolor{red}{(\emph{WARNING: #1})}}
\newcommand{\quest}[1]{\textcolor{darkgreen}{(\emph{Q: #1})}}

\else
\newcommand{\levi}[1]{}
\newcommand{\mh}[1]{}
\newcommand{\todo}[1]{}
\newcommand{\comment}[1]{}
\newcommand{\warning}[1]{}
\newcommand{\quest}[1]{}

\fi

\newcounter{alphoversetcount}

\newcommand{\resetalph}{\setcounter{alphoversetcount}{0}}

\newenvironment{alphalign*}{
\csname align*\endcsname\resetalph{}
}{
\csname endalign*\endcsname\resetalph{}
}

\newcommand{\ie}{\emph{i.e.}, }
\newcommand{\eg}{\emph{e.g.}, }

\DeclareMathOperator*{\argmin}{argmin}

\newcommand{\Reals}{{\mathbb R}}

\newcommand{\indicator}[1]{\left[\!\left[#1\right]\!\right]}

\newcommand{\eps}{\varepsilon}

\newcommand{\dop}{\tfrac{d}{\pol}}

\newcommand{\prodmix}{p_{\times}}

\newcommand{\epsmix}{\eps_{\text{mix}}}
\newcommand{\epslow}{\eps_{\text{low}}}

\newcommand{\Loss}{L}
\newcommand{\loss}{\ell}

\newcommand{\regret}{\mathcal{R}}

\newcommand{\actionset}{\mathcal{A}}
\newcommand{\contextset}{\mathcal{Q}}

\newcommand{\betaset}{\mathcal{B}}

\newcommand{\nodeset}{\mathcal{N}}
\newcommand{\leafset}{\mathcal{L}}
\newcommand{\children}{\mathcal{C}}

\newcommand{\nodesetcost}{\overline{\nodeset}}

\newcommand{\rootop}{\operatorname{root}}
\newcommand{\parent}{\text{par}}
\newcommand{\anc}{\text{anc}}
\newcommand{\ancn}{\text{anc}_+}
\newcommand{\desc}{\text{desc}}
\newcommand{\descn}{\text{desc}_+}
\newcommand{\pol}{\pi}

\newcommand{\mutexset}{mutex set}

\newcommand{\mutexsets}{mutex sets}

\newcommand{\patternset}{\mathcal{M}}

\urlstyle{same}

\newtheorem{theorem}{Theorem}
\crefname{theorem}{Theorem}{Theorems}
\newtheorem{lemma}[theorem]{Lemma}
\crefname{lemma}{Lemma}{Lemmata}

\crefname{corollary}{Corollary}{Corollaries}
\newtheorem{example}[theorem]{Example}
\crefname{example}{Example}{Examples}
\newtheorem{remark}[theorem]{Remark}
\crefname{remark}{Remark}{Remarks}

\lstset{
  language=python,
  showstringspaces=false,
  frame=none,
  escapeinside={(*}{*)},
  mathescape=true,
  keywordstyle=\color{blue},
  morekeywords={to,each,repeat,forever },
  commentstyle=\color{darkorange}\emph,
  classoffset=1,
  keywordstyle=\color{darkred},
  morekeywords={},
  classoffset=0,
  }

\newcommand{\citet}[1]{{\citeauthor{#1}~\shortcite{#1}}}

\pdfinfo{
/TemplateVersion (IJCAI.2023.0)
}

\title{Levin Tree Search with Context Models}
 
\author{
Laurent Orseau$^1$
\and
Marcus Hutter$^1$\and
Levi H. S. Lelis$^{2}$
\affiliations
$^1$Google DeepMind\\
$^2$Department of Computing Science, University of Alberta, Canada\\
and Alberta Machine Intelligence Institute (Amii), Canada\\
\emails
\{lorseau,mhutter\}@google.com,
levi.lelis@ualberta.ca
}

\begin{document}

\maketitle

\begin{abstract}
Levin Tree Search (LTS) is a search algorithm that makes use of a policy (a probability distribution over actions) 
and comes with a theoretical guarantee on the number of expansions before reaching a goal node, depending on the quality of the policy. 
This guarantee can be used as a loss function, which we call the LTS loss, to optimize neural networks representing the policy (LTS+NN). 
In this work we show that the neural network can be substituted with parameterized context models originating from the online compression literature (LTS+CM). 
We show that the LTS loss is convex under this new model,
which allows for using standard convex optimization tools,
and obtain convergence guarantees to the optimal parameters in an online setting for a given set of solution trajectories --- guarantees that cannot be provided for neural networks. 
The new LTS+CM algorithm compares favorably against LTS+NN on several benchmarks: Sokoban (Boxoban), The Witness, and the 24-Sliding Tile puzzle (STP). The difference is particularly large on STP, where LTS+NN fails to solve most of the test instances while LTS+CM solves each test instance in a fraction of a second.
Furthermore, we show that LTS+CM is able to learn a policy that solves the Rubik's cube in only a few hundred expansions, which considerably improves upon previous machine learning techniques.
\end{abstract}

\section{Introduction}\label{sec:Intro}

We
\footnote{
\ifarxiv Extended version of the IJCAI 2023 paper. 
Source code at: \url{https://github.com/google-deepmind/levintreesearch_cm}.
\else
Appendices: \url{http://arxiv.org/abs/2305.16945}. Source code: \url{https://github.com/deepmind/levintreesearch_cm}.
\fi}
consider the problem of solving a set of deterministic single-agent search problems of a given domain,
by starting with little prior domain-specific knowledge.
We focus on algorithms that learn from previously solved instances
to help solve the remaining ones.
We consider the satisficing setting where solvers should (learn to) quickly find a solution, rather than to minimize the cost of the returned solutions.

Levin Tree Search (LevinTS, LTS) is a tree search algorithm for this setup that uses a policy, \ie a probability distribution over actions, to guide the search~\cite{orseau2018single}.
LTS has a guarantee on the number of search steps required before finding a solution, which
depends on the probability of the corresponding sequence of actions as assigned by the policy.
\citet{orseau2021policy} showed that this guarantee can be used as a loss function. 
This LTS loss is used to optimize a neural-network (NN) policy in the context of the Bootstrap search-and-learn process~\cite{ArfaeeZH11}:
The NN policy is used in LTS (LTS+NN) to iteratively solve an increasing number of problems from a given set, optimizing the parameters of the NN when new problems are solved to improve the policy by minimizing the LTS loss. 

One constant outstanding issue with NNs is that the loss function (whether quadratic, log loss, LTS loss, etc.) is almost never convex in the NN's parameters.
Still, most of the time NNs are trained using online convex optimization algorithms,
such as stochastic gradient descent, Adagrad~\cite{duchi2011adaptive}, and its descendants.
Such algorithms often come with strong convergence or regret guarantees that only hold under convexity assumptions, and can help to understand the effect of various quantities (number of parameters, etc.) on the learning speed~\cite{Zin03,hazan2016oco,boyd2004convex}. 
In this paper we present parameterized context models for policies that are convex with respect to the model's parameters for the LTS loss. Such models guarantee that we obtain an optimal policy in terms of LTS loss for a given set of training trajectories
--- a guarantee NNs do not have. 

The context models we introduce for learning policies are based on the models from the online data compression literature~\cite{rissanen1983universal,willems1995ctw}. 
Our context models are composed of a set of contexts, where each context is associated with a probability distribution over actions.
These distributions are combined using product-of-experts~\cite{hinton2002poe} to produce the policy used during the LTS search.
The expressive power of product-of-experts comes mainly from the ability of each expert to (possibly softly) veto a particular option by assigning it a low probability. 
A similar combination using geometric mixing~\cite{mattern2013geomix,matthew2005adaptive} (a geometrically-parameterized variant of product-of-experts) in a multi-layer architecture has already proved competitive with NNs in classification, regression and density modelling tasks~\cite{veness2017gln,veness2021gln,budden2020gated}.
In our work the context distributions are fully parameterized and we show that the LTS loss is convex for this parameterization.

In their experiments, \citet{orseau2021policy} showed that LTS+NN performs well on two of the three evaluated domains (Sokoban and The Witness), but fails to learn a policy for the 24-Sliding Tile Puzzle (STP).
We show that LTS with context models optimized with the LTS loss within the Bootstrap process is able to learn a strong policy for all three domains evaluated, including the STP. We also show that LTS using context models is able to learn a policy that allows it to find solutions to random instances of the Rubik's Cube with only a few hundred expansions. In the context of satisficing planning, this is a major improvement over previous machine-learning-based approaches, which require hundreds of thousands expansions to solve instances of the Rubik's Cube. 

We start with giving some notation and the problem definition (\Cref{sec:Not}), 
before describing the LTS algorithm, for which 
we also provide a new lower bound on the number of node expansions (\Cref{sec:LTS}).
Then, we describe parameterized context models and explain why we can expect them to work well when using product-of-experts (\Cref{sec:CM}),
before showing that the LTS loss function is convex for this parameterization (\Cref{sec:Conv}) and considering theoretical implications.
Finally we present the experimental results (\Cref{sec:Exp}) before concluding (\Cref{sec:Conc}).

\section{Notation and Problem Definition}\label{sec:Not}

A table of notation can be found in \iflong{\cref{apdx:table_notation}}{Appendix I}.
We write $[t] = \{1,2,\dots t\}$ for a natural number $t$.
The set of nodes is $\nodeset$ and is a forest, where each tree in the forest represents a search problem with the root being the initial configuration of the problem. 
The set of children of a node $n\in\nodeset$ is $\children(n)$ and its parent is $\parent(n)$;
if a node has no parent it is a root node.
The set of ancestors of a node is $\anc(n)$ and is the transitive closure of $\parent(\cdot)$;
we also define $\ancn(n)=\anc(n)\cup\{n\}$.
Similarly, $\desc(n)$ is the set of the descendants of $n$, and $\descn(n)=\desc(n)\cup\{n\}$.
The depth of a node is $d(n)= |\anc(n)|$, and so the depth of a root node is 0.
The root $\rootop(n)$ of a node $n$ is the single node $n_0\in\ancn(n)$ such that $n_0$ is a root.
A set of nodes $\nodeset'$ is a tree in the forest $\nodeset$ if and only if there is a node $n^0\in\nodeset'$ such that  $\bigcup_{n\in\nodeset'}\rootop(n)=\{n^0\}$.
Let $\nodeset^0 = \bigcup_{n\in\nodeset} \rootop(n)$ be the set of all root nodes.
We write $n_{[j]}$ for the node at depth $j\in[d(n)]$ on the path from $\rootop(n)=n_{[0]}$ to $n=n_{[d(n)]}$.
Let $\nodeset^*\subseteq\nodeset$ be the set of all \emph{solution} nodes,
and we write $\nodeset^*(n)=\nodeset^*\cap\descn(n)$ for the set of solution nodes
under $n$.
A \emph{policy} $\pol$ is such that for all $n \in \nodeset$ and for all $n'\in\children(n): \pol(n' \mid n) \geq 0$
and $\sum_{n'\in\children(n)} \pol(n'\mid n) \leq 1$.
The policy is called \emph{proper} if the latter holds as an equality.
We define, for all $n'\in\children(n)$, $\pol(n') = \pol(n)\pol(n'\mid n)$ recursively and 
$\pol(n)=1$ if $n$ is a root node.

Edges between nodes are labeled with \emph{actions} and the children of any node all have different labels, but different nodes can have overlapping sets of actions.
The set of all edge labels is $\actionset$.
\warning{check that $A$ (now removed) does not appear in the text}
Let $a(n)$ be the label of the edge from $\parent(n)$ to $n$,
and let $\actionset(n)$ be the set of edge labels for the edges from node $n$ to its children.
Then $n\neq n' \land \parent(n)=\parent(n')$ implies
$a(n)\neq a(n')$.
\warning{We also abuse notation and use $a$ for an edge label too}

Starting at a given root node $n^0$, a tree search algorithm 
expands a set $\nodeset'\subseteq\descn(n^0)$ until it finds a solution node
in $\nodeset^*(n^0)$.
In this paper, given a set of root nodes,
we are interested in parameterized algorithms that attempt to minimize the cumulative number of nodes
that are expanded before finding a solution node for each root node,
by improving the parameters of the algorithm from found solutions,
and with only little prior domain-specific knowledge.

\section{Levin Tree Search}\label{sec:LTS}

Levin Tree Search (LevinTS, which we abbreviate to LTS here) is a tree/graph search algorithm based on best-first search~\cite{pearl1984heuristics} that uses the cost function
\footnote{\citet{orseau2018single} actually use the cost function $(d(n)+1)/\pol(n)$.
Here we use $d(n)/\pol(n)$ instead which is actually (very) slightly better and makes the notation simpler.
All original results can be straightforwardly adapted.}
$n\mapsto d(n)/\pol(n)$~\cite{orseau2018single},
which, for convenience, we abbreviate as $\dop(n)$.
That is, since $\dop(\cdot)$ is monotonically increasing from parent to child,
LTS expands all nodes by increasing order of $\dop(\cdot)$ (Theorem 2, \citet{orseau2018single}).

\begin{theorem}[LTS upper bound, adapted from \citet{orseau2018single}, Theorem 3]\label{thm:upper_bound}
Let $\pol$ be a policy.
For any node $n^*\in\nodeset$,
let $\nodesetcost(n^*) = \{n\in\nodeset: \rootop(n)=\rootop(n^*)\land \dop(n) \leq \dop(n^*)\}$ be
the set of nodes within the same tree
with cost at most that of $n^*$.
Then
\begin{align*}
    |\nodesetcost(n^*)| \leq 1+\frac{d(n^*)}{\pol(n^*)}\,.
\end{align*}
\end{theorem}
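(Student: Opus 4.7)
My plan is to bound $|\nodesetcost(n^*)|$ by counting via the leaves of the search region together with a Kraft-style inequality on antichains. Write $T := \nodesetcost(n^*)$ and $c^* := d(n^*)/\pol(n^*)$.

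First I would observe that $T$ is ancestor-closed within its tree: since $\dop$ is monotonically non-decreasing from parent to child (as noted immediately above the theorem), $n \in T$ implies $\dop(\parent(n)) \leq \dop(n) \leq c^*$, and hence $\parent(n) \in T$. Let $L$ be the set of leaves of $T$, i.e.\ the nodes in $T$ with no children in $T$; ancestor-closedness forces $L$ to be an antichain.

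Next I would construct an injection $\phi : T \setminus \{n^0\} \to \{(\ell, j) : \ell \in L,\ 1 \leq j \leq d(\ell)\}$ defined by $\phi(n) = (\ell(n), d(n))$, where $\ell(n)$ is obtained by repeatedly descending from $n$ to any child that still lies in $T$, stopping when no such child exists (the process terminates since $T$ is finite, and it terminates at a leaf of $T$). The map is injective because the root-to-$\ell$ path contains exactly one node at each depth. This yields $|T| - 1 \leq \sum_{\ell \in L} d(\ell)$.

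To finish, for each $\ell \in L \subseteq T$ the defining cost bound $d(\ell)/\pol(\ell) \leq c^*$ rearranges to $d(\ell) \leq c^* \pol(\ell)$. The Kraft-style inequality $\sum_{n \in A} \pol(n) \leq 1$ for any antichain $A$ — provable by induction on the tree from $\sum_{n' \in \children(n)} \pol(n' \mid n) \leq 1$ — applied to $L$ gives $\sum_{\ell \in L} \pol(\ell) \leq 1$. Combining, $\sum_{\ell \in L} d(\ell) \leq c^* \sum_{\ell \in L} \pol(\ell) \leq c^*$, so $|T| \leq 1 + c^*$ as required.

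The only delicate step I anticipate is the leaf-injection, which requires every node in $T$ to have a descendant leaf in $T$. This holds whenever $T$ is finite, which is the interesting case — if $T$ is infinite the stated bound is vacuous, and in any case the inequality can be recovered by applying the argument to a monotone sequence of finite truncations.
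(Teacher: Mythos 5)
Your proof is correct and follows essentially the same route as the paper's: bounding $|\nodesetcost(n^*)|$ by $1+\sum_{\ell\in L} d(\ell)$ via the leaves, rewriting $d(\ell)\leq \dop(n^*)\,\pol(\ell)$, and closing with the Kraft-style bound $\sum_{\ell\in L}\pol(\ell)\leq 1$, which is exactly the paper's Lemma~10 (\cref{lem:sum_to_one}). You merely make explicit what the paper leaves implicit --- the ancestor-closedness of the set, the leaf-depth injection, and the finiteness caveat (which in fact holds automatically, since $d(n)\leq \dop(n^*)\pol(n)\leq \dop(n^*)$ bounds the depth and the antichain inequality bounds the width at each depth).
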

\begin{proof}
Let $\leafset$ be the set of leaves of $\nodesetcost(n^*)$, then
\begin{align*}
    |\nodesetcost(n^*)| &\leq 1+ \sum_{n\in\leafset} d(n) = 1+ \sum_{n\in\leafset} \pol(n)\dop(n) \\
    &\leq 1+ \sum_{n\in\leafset} \pol(n)\dop(n^*) 
    \leq 1+\dop(n^*)\,,
\end{align*}
where we used \iflong{\cref{lem:sum_to_one}}{Lemma 10} (in Appendix) on the last inequality.
\end{proof}

The consequence is that LTS started at $\rootop(n^*)$ expands at most $1+\dop(n^*)$ nodes before reaching $n^*$.

\citet{orseau2021policy} also provides a related lower bound showing that, for any policy, there are sets of problems
where any algorithm needs to expand $\Omega(\dop(n^*))$ nodes before reaching some node $n^*$ in the worst case.
They also turn the guarantee of \cref{thm:upper_bound} into a loss function, used to optimize the parameters of a neural network.
Let $\nodeset'$ be a set of solution nodes whose roots are all different, define the \emph{LTS loss function}:
\begin{align}\label{eq:loss}
    \Loss(\nodeset') = \sum_{n\in\nodeset'} \dop(n)
\end{align}
which upper bounds the total search time of LTS to reach all nodes in $\nodeset'$.
\Cref{eq:loss} is the loss function used in \iflong{\cref{alg:bootstrap} (\cref{apdx:bootstrap})}{Algorithm 2 (Appendix A)}
to optimize the policy --- but a more precise definition for context models will be given later.
To further justify the use of this loss function,
we provide a lower bound on the number of expansions that LTS must perform before reaching an (unknown) target node.

\begin{theorem}[Informal lower bound]\label{thm:informal_lower_bound}
For a proper policy $\pol$ and any node $n^*$,
the number of nodes whose $\dop$ cost is at most that of $n^*$
is at least $[\frac{1}{\bar d}\dop(n^*)-1]/(|\actionset|-1)$,
where $\bar d-1$ is the average depth of the leaves of those nodes.
\end{theorem}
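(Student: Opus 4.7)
The plan is to combine two ingredients: a lower bound on the number $L$ of leaves of $\nodesetcost(n^*)$ derived from properness of $\pol$, and a structural bound relating $|\nodesetcost(n^*)|$ to $L$ via the branching factor.

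First, since $\dop$ is non-decreasing from parent to child (because $d$ increases by $1$ while $\pol$ can only decrease), $\nodesetcost(n^*)$ is a rooted subtree of $\nodeset$. Let $\leafset$ denote its leaves, $L := |\leafset|$, and let $F := \bigcup_{\ell \in \leafset}\children(\ell)$ be the ``frontier'' of children just outside $\nodesetcost(n^*)$. By definition each $c \in F$ satisfies $\dop(c) > \dop(n^*)$, hence $\pol(c) < (d(\ell)+1)/\dop(n^*)$ where $\ell = \parent(c) \in \leafset$.

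Next I would invoke properness of $\pol$. In the clean case where every internal node of $\nodesetcost(n^*)$ has all its children inside $\nodesetcost(n^*)$ and $\leafset$ contains no terminal nodes of $\nodeset$, $F$ forms a complete anti-chain below $\rootop(n^*)$, so $\sum_{c \in F}\pol(c) = 1$. Combined with $|\children(\ell)| \leq |\actionset|$ and the definition $\bar d - 1 = \tfrac{1}{L}\sum_{\ell \in \leafset}d(\ell)$, this gives $1 < |\actionset|L\bar d/\dop(n^*)$, i.e., $L > \dop(n^*)/(|\actionset|\bar d)$. The structural bound is a standard branching argument: $\nodesetcost(n^*)$ has $|\nodesetcost(n^*)|-1$ edges, each emanating from one of its $|\nodesetcost(n^*)|-L$ internal nodes and each internal node has at most $|\actionset|$ children in the subtree, so $|\nodesetcost(n^*)|-1 \leq |\actionset|(|\nodesetcost(n^*)|-L)$, which rearranges to $|\nodesetcost(n^*)| \geq (|\actionset|L - 1)/(|\actionset|-1)$. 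Substituting the lower bound on $L$ yields the claimed $(\dop(n^*)/\bar d - 1)/(|\actionset|-1)$.

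The main obstacle---and the reason the statement is labelled informal---is that relaxing the above ``clean-case'' assumptions changes the proper-policy frontier sum: when $\leafset$ contains terminal nodes the mass $\sum_{c\in F}\pol(c)$ falls strictly below $1$, and when internal nodes of $\nodesetcost(n^*)$ lose some children to the frontier there are extra boundary terms outside $\bigcup_\ell\children(\ell)$ that are not captured by the per-leaf counting. A formal version would either assume these cases away or replace $\bar d$ with a probability-weighted average over the true boundary, and carry the correction through the two steps above.
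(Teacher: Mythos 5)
Your clean-case derivation is correct and yields exactly the claimed bound, and its skeleton --- properness giving unit mass on a frontier just outside $\nodesetcost(n^*)$, combined with a branching-factor count --- is the same as in the paper's formal proof (\cref{thm:lower_bound}, Appendix B). What differs is how the two ``obstacles'' you flag at the end are handled, and here the paper's execution is instructive. It defines the frontier as $\leafset'(n^*)=\{n:\dop(n)>\dop(n^*)\land\parent(n)\in\nodesetcost(n^*)\}$, i.e.\ \emph{all} out-of-set children of \emph{any} node of $\nodesetcost(n^*)$, not only children of its leaves; then $\nodesetcost(n^*)\cup\leafset'(n^*)$ is a full tree and the structural count is done in one stroke over the whole set, $|\leafset'(n^*)|+|\nodesetcost(n^*)|=1+\sum_{n\in\nodesetcost(n^*)}|\children(n)|\leq 1+A|\nodesetcost(n^*)|$ (with $A$ the maximal branching factor), so internal nodes that lose some children to the frontier require no separate bookkeeping. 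More importantly, instead of lower-bounding the leaf count $L$ through $\sum_{c\in F}\pol(c)=1$ --- the step where your argument genuinely needs the clean case --- the paper counts the frontier \emph{exactly} via the per-node identity $\dop(n)\pol(n)/d(n)=1$, giving $|\leafset'(n^*)|=\sum_{n\in\leafset'(n^*)}\dop(n)\pol(n)/d(n)>\dop(n^*)/\bar d$ with $\bar d=1/\sum_{n\in\leafset'(n^*)}\pol(n)/d(n)$. This is precisely the ``probability-weighted average over the true boundary'' you anticipated, except it comes out as a \emph{harmonic} mean, and mass conservation (\cref{lem:sum_to_one}) is then needed only to certify that $\bar d$ is a genuine mean --- not to make the inequality true --- which is why the formal bound survives without your clean-case assumptions. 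Note also that the two $\bar d$'s are different quantities: yours is the unweighted arithmetic mean of $d(\ell)+1$ over the leaves, the paper's is a $\pol$-weighted harmonic mean over the frontier (at most the weighted arithmetic mean, as the paper remarks); both are consistent with the informal wording. In short: your proposal is essentially right and honestly scoped, and carrying out your own suggested fix, in the probability-weighted harmonic form, is exactly what the paper's formal proof does.
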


A more formal theorem is given in \iflong{\cref{apdx:lower_bound}}{Appendix B}.

\begin{example}
For a binary tree with a uniform policy, 
since $\bar d=d(n^*)+1$,
the lower bound gives $2^{d}d/(d+1)-1$ nodes for a node $n^*$ at depth $d$ and of probability $2^{-d}$, which is quite tight since the tree has $2^d - 1$ nodes.
The upper bound $1+d2^d$ is slightly looser.
\end{example}

\begin{remark}
Even though pruning (such as state-equivalence pruning) can make the policy improper, 
in which case the lower bound does not hold and the upper bound can be loose,
optimizing the parameters of the policy for the upper bound still makes sense, since pruning can be seen as a feature
placed on top of the policy --- that is, the policy is optimized as if pruning is not used.
It must be noted that for optimization \citet{orseau2021policy} (Section 4) use the log gradient trick to replace the upper bound loss with the actual number of expansions in an attempt to account for pruning; as the results of this paper suggest, it is not clear whether one should account for the actual number of expansions while optimizing the model.
\end{remark}

\section{Context Models}\label{sec:CM}

Now we consider that the policy $\pol$ has some parameters $\beta\in\betaset$
(where $\betaset\subseteq\Reals^k$ for some $k$, which will be made more precise later) 
and we write $\pol(\cdot;\beta)$ when the parameters are relevant to the discussion.
As mentioned in the introduction, we want the LTS loss function of \cref{eq:loss}
to be convex in the policy's parameters,
which means that we cannot use just any policy --- in particular this rules out deep neural networks.
Instead, we use context models, which have been widely used in online prediction and compression (\eg \cite{rissanen1983universal,willems1995ctw,matthew2005adaptive,veness2021gln}).

The set of contexts is $\contextset$.
A context is either active or inactive at a given node in the tree.
At each node $n$, the set of active contexts is $\contextset(n)$,
and the policy's prediction at $n$ depends only on these active contexts.

Similarly to patterns in  pattern databases~\cite{culberson1998pattern},
we organize contexts in sets of mutually exclusive contexts, called \emph{\mutexsets{}},
and each context belongs to exactly one mutex set.
The set of \mutexsets{} is $\patternset$.
For every \mutexset{} $M \in \patternset$, 
for every node $n$, at most one context is active per mutex set. In this paper we are in  the case where \emph{exactly} one context is active per mutex set, which is what happens when searching with multiple pattern databases, where each pattern database provides a single pattern for a given node in the tree. 
When designing contexts, it is often more natural to directly design \mutexsets{}.
See \Cref{fig:product_mixing} for an example, omitting the bottom parts of (b) and (d) for now.

To each context $c\in\contextset$ we associate a \emph{predictor} $p_c:\actionset\to[0,1]$
which is a (parameterized) categorical probability distribution over edge labels that will be optimized from training data --- the learning part will be explained in \cref{sec:Opt}.

To combine the predictions of the active contexts at some node $n$, 
we take their renormalized product, 
as an instance of product-of-experts~\cite{hinton2002poe}: 
\begin{multline}\label{eq:prodmix}
    \forall a\in\actionset(n): 
    \prodmix(n, a) = 
    \frac{\prod_{c\in\contextset(n)}p_c(a)}{\sum_{a'\in\actionset(n)} \prod_{c\in\contextset(n)} p_c(a')}
\end{multline}
We refer to the operation of \cref{eq:prodmix} as \emph{product mixing},
by relation to geometric mixing~\cite{mattern2013geomix}, a closely related operation.
Then, one can use $\prodmix(n, a)$ to define the policy
 $\pi(n'|n)=\prodmix(n,a(n'))$ to be used with LTS.

The choice of this particular aggregation of the individual predictions is best  explained by the following example.

\begin{example}[Wisdom of the product-of-experts crowd]
\Cref{fig:product_mixing} (a) and (b) displays a simple maze environment where the agent is coming from the left. The only sensible action is to go Up (toward the exit),
but no single context sees the whole picture.
Instead, they see only individual cells around the agent, and one context also sees (only)
the previous action (which is Right).
The first two contexts only see empty cells to the left and top of the agent, and are uninformative (uniform probability distributions) about which action to take.
But the next three contexts, while not knowing what to do, know what \emph{not} to do.
When aggregating these predictions with product mixing, only one action remains with high probability: Up.
\end{example}

\begin{example}[Generalization and specialisation]
Another advantage of product mixing is its ability to make use of both general predictors
and specialized predictors.
Consider a \mutexset{} composed of $m$ contexts, and assume we have a total of $M$ data points (nodes on solution trajectories).
Due to the mutual exclusion nature of \mutexsets{}, these $M$ data points must be partitioned among the $m$ contexts.
Assuming for simplicity a mostly uniform partitioning, then each context receives approximately $M/m$ data points to learn from.
Consider the \mutexsets{} in \cref{fig:product_mixing} (b):
The first 4 \mutexsets{} have size 3 (each context can see a wall, an empty cell or the goal) and the last one has size 4.
These are very small sizes and thus the parameters of the contexts predictors should quickly see enough data to learn an accurate distribution.
However, while accurate, the distribution can hardly be \emph{specific}, and each predictor alone is not sufficient to obtain a nearly-deterministic policy --- though fortunately product mixing helps with that.
Now compare with the 2-cross \mutexset{} in \cref{fig:product_mixing} (d), and assume that cells outside the grid are walls. A quick calculation, assuming only one goal cell, gives that it should contain a little less than 1280 different contexts.
 Each of these contexts thus receives less data to learn from on average than the contexts in (b), but also sees more information from the environment which may lead to more specific (less entropic)
 distributions, as is the case in situation (c).
\end{example}

\begin{figure*}
    \begin{center}
    \scalebox{0.75}{\begin{tikzpicture}

\node[align=center] at (2,-1em) {(a)};
\node[align=center] at (9,-1em) {(b)};

\newcommand{\wallcell}[2]{\fill[darkgray] (#1,#2) rectangle (#1+1,#2+1)}

\begin{scope}[scale=0.5]
\fill[darkgray] (1,0) rectangle (7,6);
\fill[white] (2,1) rectangle (6,5);
\wallcell{3}{5};
\wallcell{4}{4};
\wallcell{4}{2};
\wallcell{4}{1};
\wallcell{3}{4};
\wallcell{2}{2};
\wallcell{1}{2};

\foreach \x in {1,2,3,4,5,6} {
  \foreach \y in {0,1,2,3,4,5} {
    \draw[thick] (\x,\y) rectangle (\x+1,\y+1);
  }
}

\def\xplay{3}
\def\yplay{1}
\draw[red, thick, fill=red] (\xplay+.1,\yplay+.1) -- (\xplay+.5,\yplay+.9) -- (\xplay+.9,\yplay+.1) -- cycle;
\draw[->,blue, very thick]  (\xplay-.2,\yplay+.5) -- (\xplay+.5,\yplay+.5);

\def\xapple{5}
\def\yapple{3}
\draw[solid, fill=darkgreen] (\xapple+.5,\yapple+.5) circle (.4);

\end{scope}

\def\xsep{1.3}

\begin{scope}[xshift=-.2cm]
    \begin{scope}[xshift=5.1cm]
        \begin{scope}[yscale=0.5,align=left, text width=2cm]
            \node[] at (0,3.5) {Left};
            \node[] at (0,2.5) {Up};
            \node[] at (0,1.5) {Right};
            \node[] at (0,0.5) {Down};
        \end{scope}
    \end{scope}
    
    \begin{scope}[xshift=5cm]
        \begin{scope}[yshift=2cm,scale=0.5]
            \draw[gray,thick] (1,1) rectangle (2,2);
            \draw[pink, thick, fill=pink] (1.1,1.1) -- (1.5,1.9) -- (1.9,1.1) -- cycle;
            \draw[black, thick] (0,1) -- (1,1) -- (1,2) -- (0,2) -- cycle;
        \end{scope}
    
        \begin{scope}[yscale=0.5]
            \foreach \y in {0,1,2,3} {
                \draw[black, thin] (0,\y) rectangle (1,\y+1);
            }
            \draw[draw opacity=0, fill=blue] (0,3) rectangle (0.25,4);
            \draw[draw opacity=0, fill=blue] (0,2) rectangle (0.25,3);
            \draw[draw opacity=0, fill=blue] (0,1) rectangle (0.25,2);
            \draw[draw opacity=0, fill=blue] (0,0) rectangle (0.25,1);
        \end{scope}
    \end{scope}

    \begin{scope}[xshift=5cm+\xsep*1cm]
        \begin{scope}[xshift=-.5cm,yshift=2cm,scale=0.5]
            \draw[gray,thick] (1,1) rectangle (2,2);
            \draw[pink, thick, fill=pink] (1.1,1.1) -- (1.5,1.9) -- (1.9,1.1) -- cycle;
            \draw[black, thick] (1,2) -- (1,3) -- (2,3) -- (2,2) -- cycle;
        \end{scope}
    
        \begin{scope}[yscale=0.5]
            \foreach \y in {0,1,2,3} {
                \draw[black, thin] (0,\y) rectangle (1,\y+1);
            }
            \draw[draw opacity=0, fill=blue] (0,3) rectangle (0.25,4);
            \draw[draw opacity=0, fill=blue] (0,2) rectangle (0.25,3);
            \draw[draw opacity=0, fill=blue] (0,1) rectangle (0.25,2);
            \draw[draw opacity=0, fill=blue] (0,0) rectangle (0.25,1);
        \end{scope}
    \end{scope}
    
    \begin{scope}[xshift=5cm+\xsep*2cm]
        \begin{scope}[xshift=-.5cm,yshift=2cm,scale=0.5]
            \draw[gray,thick] (1,1) rectangle (2,2);
            \draw[pink, thick, fill=pink] (1.1,1.1) -- (1.5,1.9) -- (1.9,1.1) -- cycle;
            \draw[black, thick, fill=darkgray] (2,1) rectangle (3,2);
        \end{scope}
        
        \begin{scope}[yscale=0.5]
            \foreach \y in {0,1,2,3} {
                \draw[black, thin] (0,\y) rectangle (1,\y+1);
            }
            \draw[draw opacity=0, fill=blue] (0,3) rectangle (0.32,4);
            \draw[draw opacity=0, fill=blue] (0,2) rectangle (0.32,3);
            \draw[draw opacity=0, fill=blue] (0,1) rectangle (0.04,2);
            \draw[draw opacity=0, fill=blue] (0,0) rectangle (0.32,1);
        \end{scope}
    \end{scope}
    
    \begin{scope}[xshift=5cm+\xsep*3cm]
        \begin{scope}[xshift=-0.5cm,yshift=2.5cm,scale=0.5]
            \draw[gray,thick] (1,1) rectangle (2,2);
            \draw[pink, thick, fill=pink] (1.1,1.1) -- (1.5,1.9) -- (1.9,1.1) -- cycle;
            \draw[black, thick, fill=darkgray] (1,0) rectangle (2,1);
        \end{scope}
    
        \begin{scope}[yscale=0.5]
            \foreach \y in {0,1,2,3} {
                \draw[black, thin] (0,\y) rectangle (1,\y+1);
            }
            \draw[draw opacity=0, fill=blue] (0,3) rectangle (0.32,4);
            \draw[draw opacity=0, fill=blue] (0,2) rectangle (0.32,3);
            \draw[draw opacity=0, fill=blue] (0,1) rectangle (0.32,2);
            \draw[draw opacity=0, fill=blue] (0,0) rectangle (0.04,1);
        \end{scope}
    \end{scope}
    
    \begin{scope}[xshift=5cm+\xsep*4cm]
        \begin{scope}[xshift=-.5cm,yshift=2cm,scale=0.5]
            \draw[gray,thick] (1,1) rectangle (2,2);
            \draw[pink, thick, fill=pink] (1.1,1.1) -- (1.5,1.9) -- (1.9,1.1) -- cycle;
            \draw[->,blue, very thick]  (0.8,1.5) -- (1.5,1.5);
        \end{scope}
    
        \begin{scope}[yscale=0.5]
            \foreach \y in {0,1,2,3} {
                \draw[black, thin] (0,\y) rectangle (1,\y+1);
            }
            \draw[draw opacity=0, fill=blue] (0,3) rectangle (0.04,4);
            \draw[draw opacity=0, fill=blue] (0,2) rectangle (0.32,3);
            \draw[draw opacity=0, fill=blue] (0,1) rectangle (0.32,2);
            \draw[draw opacity=0, fill=blue] (0,0) rectangle (0.32,1);
        \end{scope}
    \end{scope}
    
    \begin{scope}[xshift=5cm+\xsep*5cm]
        \begin{scope}[yshift=2cm,scale=0.5]    
            \node[align=center] at (1,1) {product \\mixing};
        \end{scope}
    
        \begin{scope}[yscale=0.5]
            \foreach \y in {0,1,2,3} {
                \draw[black, thin] (0,\y) rectangle (1,\y+1);
            }
            \draw[draw opacity=0, fill=blue] (0,3) rectangle (0.04,4);
            \node[] at (0.5,3.5) {0.01};
            \draw[draw opacity=0, fill=blue] (0,2) rectangle (0.96,3);
            \node[color=white] at (0.5,2.5) {\textbf{0.97}};
            \draw[draw opacity=0, fill=blue] (0,1) rectangle (0.04,2);
            \node[] at (0.5,1.5) {0.01};
            \draw[draw opacity=0, fill=blue] (0,0) rectangle (0.04,1);
            \node[] at (0.5,0.5) {0.01};
        \end{scope}
    \end{scope}
\end{scope}

\end{tikzpicture}} 
    \hspace{1em}
    \scalebox{0.75}{\begin{tikzpicture}

\node[align=center] at (2,-1em) {(c)};
\node[align=center] at (7,-1em) {(d)};

\newcommand{\wallcell}[2]{\fill[darkgray] (#1,#2) rectangle (#1+1,#2+1)}
\newcommand{\emptycell}[2]{\draw[thick] (#1,#2) rectangle (#1+1,#2+1)}
\newcommand{\applepic}[2]{\draw[solid, fill=darkgreen] (#1+.5,#2+.5) circle (.4)}

\begin{scope}[scale=0.5]
\fill[darkgray] (1,0) rectangle (7,6);
\fill[white] (2,1) rectangle (6,5);
\wallcell{3}{5};
\wallcell{4}{4};
\wallcell{4}{2};
\wallcell{4}{1};
\wallcell{3}{4};
\wallcell{2}{2};
\wallcell{1}{2};

\foreach \x in {1,2,3,4,5,6} {
  \foreach \y in {0,1,2,3,4,5} {
    \emptycell{\x}{\y};
  }
}

\def\xplay{3}
\def\yplay{3}

\draw[red, thick, fill=red] (\xplay+.1,\yplay+.1) -- (\xplay+.5,\yplay+.9) -- (\xplay+.9,\yplay+.1) -- cycle;
\draw[->,blue, very thick]  (\xplay-.2,\yplay+.5) -- (\xplay+.5,\yplay+.5);

\def\xapple{5}
\def\yapple{3}
\applepic{\xapple}{\yapple};
\end{scope}

\begin{scope}[xshift=-.2cm]
    \begin{scope}[xshift=5.1cm]
        \begin{scope}[yscale=0.5,align=left, text width=2cm]
            \node[] at (0,3.5) {Left};
            \node[] at (0,2.5) {Up};
            \node[] at (0,1.5) {Right};
            \node[] at (0,0.5) {Down};
        \end{scope}
    \end{scope}
    
    \begin{scope}[xshift=5cm]
        \begin{scope}[yshift=2.5cm,scale=0.3]
            \draw[gray,thick] (1,1) rectangle (2,2);
            \draw[pink, thick, fill=pink] (1.1,1.1) -- (1.5,1.9) -- (1.9,1.1) -- cycle;
            \emptycell{0}{1};
            \emptycell{1}{0};
            \emptycell{2}{1};
            \emptycell{1}{2};
            \wallcell{1}{2};
        \end{scope}
    
        \begin{scope}[yscale=0.5]
            \foreach \y in {0,1,2,3} {
                \draw[black, thin] (0,\y) rectangle (1,\y+1);
            }
            \draw[draw opacity=0, fill=blue] (0,3) rectangle (0.22,4);
            \draw[draw opacity=0, fill=blue] (0,2) rectangle (0.04,3);
            \draw[draw opacity=0, fill=blue] (0,1) rectangle (0.22,2);
            \draw[draw opacity=0, fill=blue] (0,0) rectangle (0.52,1);
        \end{scope}
    \end{scope}

    \begin{scope}[xshift=6.5cm]
        \begin{scope}[xshift=0cm,yshift=2.5cm,scale=0.3]
            \draw[gray,thick] (1,1) rectangle (2,2);
            \draw[pink, thick, fill=pink] (1.1,1.1) -- (1.5,1.9) -- (1.9,1.1) -- cycle;
            \emptycell{-1}{1};
            \wallcell{-1}{1};
            \emptycell{0}{1};
            \emptycell{2}{1};
            \emptycell{3}{1};
            \applepic{3}{1};
            \emptycell{1}{-1};
            \emptycell{1}{0};
            \emptycell{1}{1};
            \emptycell{1}{2};
            \emptycell{1}{3};
            \emptycell{1}{2};
            \emptycell{1}{3};
            \wallcell{1}{2};
            \wallcell{1}{3};
        \end{scope}
    
        \begin{scope}[yscale=0.5]
            \foreach \y in {0,1,2,3} {
                \draw[black, thin] (0,\y) rectangle (1,\y+1);
            }
            \draw[draw opacity=0, fill=blue] (0,3) rectangle (0.04,4);
            \draw[draw opacity=0, fill=blue] (0,2) rectangle (0.04,3);
            \draw[draw opacity=0, fill=blue] (0,1) rectangle (0.96,2);
            \draw[draw opacity=0, fill=blue] (0,0) rectangle (0.04,1);
        \end{scope}
    \end{scope}

    \begin{scope}[xshift=8cm]
        \begin{scope}[xshift=-.5cm,yshift=2cm,scale=0.5]
            \draw[gray,thick] (1,1) rectangle (2,2);
            \draw[pink, thick, fill=pink] (1.1,1.1) -- (1.5,1.9) -- (1.9,1.1) -- cycle;
            \draw[->,blue, very thick]  (0.8,1.5) -- (1.5,1.5);
        \end{scope}
    
        \begin{scope}[yscale=0.5]
            \foreach \y in {0,1,2,3} {
                \draw[black, thin] (0,\y) rectangle (1,\y+1);
            }
            \draw[draw opacity=0, fill=blue] (0,3) rectangle (0.04,4);
            \draw[draw opacity=0, fill=blue] (0,2) rectangle (0.32,3);
            \draw[draw opacity=0, fill=blue] (0,1) rectangle (0.32,2);
            \draw[draw opacity=0, fill=blue] (0,0) rectangle (0.32,1);
        \end{scope}
    \end{scope}
    
    \begin{scope}[xshift=9.5cm]
        \begin{scope}[yshift=2cm,scale=0.5]    
            \node[align=center] at (1,1) {product \\mixing};
        \end{scope}
    
        \begin{scope}[yscale=0.5]
            \foreach \y in {0,1,2,3} {
                \draw[black, thin] (0,\y) rectangle (1,\y+1);
            }
            \draw[draw opacity=0, fill=blue] (0,3) rectangle (0.04,4);
            \node[] at (0.5,3.5) {0.01};
            \draw[draw opacity=0, fill=blue] (0,2) rectangle (0.04,3);
            \node[] at (0.5,2.5) {0.01};
            \draw[draw opacity=0, fill=blue] (0,1) rectangle (0.96,2);
            \node[color=white] at (0.5,1.5) {\textbf{0.96}};
            \draw[draw opacity=0, fill=blue] (0,0) rectangle (0.04,1);
            \node[] at (0.5,0.5) {0.02};
        \end{scope}
    \end{scope}
\end{scope}

\end{tikzpicture}}
    \end{center}
    \caption{
    (a) A simple maze environment. The dark gray cells are walls, the green circle
    is a goal.
    The blue arrow symbolizes the fact that the agent (red triangle) is coming from the left. 
    (b) A simple context model with five \mutexsets{}:
    One \mutexset{} for each of the four cells around the triangle,
    and one \mutexset{} for the last chosen action.
    Each of the first four \mutexsets{} contains three contexts (wall, empty cell, goal),
    and the last \mutexset{} contains four contexts (one for each action).
    The 5 active contexts (one per \mutexset{}) for the situation shown in (a) are depicted at the top, while their individual probability predictions are the horizontal blue bars for each of the four actions.
    The last column is the resulting product mixing prediction of the 5 predictions.
    No individual context prediction exceeds 1/3 for any action, yet the product mixing 
    prediction is close to 1 for the action Up.
    (c) Another situation.
    (d) A different set of \mutexsets{} for the situation in (c):
    A 1-cross around the agent, a 2-cross around the agent, and the last action.
    The specialized 2-cross context is certain that the correct action is Right,
    despite the two other contexts together giving more weight to action Down.
    The resulting product mixing gives high probability to Right,
    showing that, in product mixing, specialized contexts can take precedence over less-certain more-general contexts.
    }
    \label{fig:product_mixing}
\end{figure*}

\begin{remark}
A predictor that has a uniform distribution
has no effect within a product mixture.
Hence, adding new predictors initialized with uniform predictions does not change the policy,
and similarly, if a context does not happen to be useful to learn a good policy,
optimization will push its weights toward the uniform distribution, implicitly discarding it.
\end{remark}

Hence, product mixing is able to take advantage of both general contexts that occur in many situations and specialised contexts tailored to specific situations --- and anything in-between.

Our LTS with context models algorithm  is given in \cref{alg:ltscm},
building upon the one by \citet{orseau2021policy} with a few differences.
As mentioned earlier, it is a best-first search algorithm 
and uses a priority queue to maintain the nodes to be expanded next.
\todo{Rename to \code{"budget_used"}, also in appendix}
It is also budgeted and returns \code{"budget_reached"}
if too many nodes have been expanded.
It returns \code{"no_solution"} if all nodes have been expanded
without reaching a solution node --- assuming safe pruning or no pruning.
Safe pruning (using \code{visited_states})
can be performed 
if the policy is Markovian~\protect\cite{orseau2018single},
which is the case in particular when the set of active contexts
$\contextset(n)$ depends only on \code{state}$(n)$.
The algorithm assumes the existence of application-specific \code{state} and \code{state_transition} functions,
such that \code{state}($n'$) = \code{state_transition(state}$(n), a(n'))$
for all $n'\in\children(n)$.
Note that with context models the prediction $\pol(n'\mid n)$ depends
on the active contexts $\contextset(n)$ but \emph{not} on the state of a child node.
This allows us to delay the state transition until the child is extracted from the queue, saving up to a branching factor of state transitions (see also
\cite{agostinelli2021expansions}).

\begin{algorithm}[tbh!]
\caption{Budgeted LTS with context models.
Returns a solution node if any is found, 
or \code{"budget_reached"} or \code{"no_solution"}.}
\label{alg:ltscm}
\begin{lstlisting}
# $n^0$: root node
# $B$: node expansion budget
# $\beta$: parameters of the context models
def LTS+CM($n^0$, $B$, $\beta$):
  q = priority_queue()
  # tuple: {$\lstcommentcolor{\dop,\, d,\, \pol_n,}$ node, state, action}
  tup = {0, 0, 1, $n^0$, state($n^0$), False}
  q.insert(tup) # insert root node/state
  visited_states = {} # dict: state($\lstcommentcolor{n}$) -> $\lstcommentcolor{\pol(n)}$
  repeat forever:
    if q is empty: return "no_solution"
    # Extract the tuple with minimum cost $\lstcommentcolor{\dop}$
    $\dop n$, d, $\pol_{n}$, n, s_parent, a = q.extract_min()
    if $n\in\nodeset^*$: return n # solution found
    s = state_transition(s_parent, a) if a else s_parent
    $\pol_s$ = visited_states.get(s, default=0)
    # Note: BFS ensures $\lstcommentcolor{\dop(n_s) \leq \dop(n); s=state(n_s)}$ 
    # Optional: Prune the search if s is better
    if $\pol_s \geq \pol_n$: continue 
    else: visited_states.set(s, $\pol_n$)
    # Node expansion
    expanded += 1
    if expanded == $B$: return "budget_reached"
    
    Z = $\sum_{a\in\actionset(n)} \prod_{c\in\contextset(n)} p_c(a; \beta)$ # normalizer
    for $n'\in\children(n)$:
      a = $a(n')$ # action
      # Product mixing of the active contexts' predictions
      $\prodmix{_{,a}}$ = $\frac1Z \prod_{c\in\contextset(n)} p_c($a$; \beta)$ # See (*\lstcommentcolor{\cref{eq:pcabeta}}*)
      # Action probability, $\lstcommentcolor{\epsmix}$ ensures $\lstcommentcolor{\pol_{n'}>0}$ 
      $\pol_{n'}$ = $\pol_{n}((1-\epsmix)\prodmix{_{,a}} + \frac{\epsmix}{|\actionset(n)|})$
      q.insert({(d+1)/$\pol_{n'}$, d+1, $\pol_{n'}$, $n'$, s, a})
      
\end{lstlisting}
\end{algorithm}

\begin{remark}
In practice, usually a \mutexset{} can be implemented as a hashtable
as for pattern databases:
the active context is read from the current state of the environment,
and the corresponding predictor is retrieved from the hashtable.
This allows for a computational cost of $O(\log |M|)$ per mutex set $M$,
or even $O(1)$ with perfect hash functions,
and thus $O(\sum_{M\in\patternset}\log |M|)$ which is much smaller than $|\contextset|$.
Using an imperfect hashtable, only the contexts that appear on the paths to the found solution nodes need to be stored.
\end{remark}

\section{Convexity}\label{sec:Conv}

Because the LTS loss in \cref{eq:loss} is different from the log loss~\cite{cesabianchi2006prediction} (due to the sum in-between the products),  
optimization does \emph{not} reduce to maximum likelihood estimation.
However, we show that convexity in the log loss implies convexity in the LTS loss.
This means, in particular, that if a probability distribution is log-concave (such as all the members of the exponential family), that is, the log loss for such models is convex,
then the LTS loss is convex in these parameters, too.

First we show that every sequence of functions with a convex log loss
also have convex \emph{inverse} loss and LTS loss.

\begin{theorem}[Log loss to inverse loss convexity]\label{thm:logloss_to_inverseloss}
Let $f_1, f_2,\dots f_s$ be a sequence of positive functions
with $f_i: \Reals^n\to(0, \infty)$ for all $i\in[s]$
and such that
$\beta\mapsto -\log f_i(\beta)$ 
is convex for each $i\in[s]$, then
$L(\beta) = \sum_k \frac{1}{\prod_t f_{k,t}(\beta)}$ is convex, where each $(k,t)$ corresponds to a unique index in $[s]$.
\end{theorem}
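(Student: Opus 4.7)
The plan is to prove the statement by a short direct calculation that reduces each summand to an exponential of a convex function, and then to apply two standard preservation facts about convexity (composition with the exponential, and summation).

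First I would rewrite each summand in an equivalent form that exposes its convex structure. Since every $f_{k,t}$ is strictly positive, we have
\begin{align*}
\frac{1}{\prod_t f_{k,t}(\beta)} \;=\; \exp\!\left(-\sum_t \log f_{k,t}(\beta)\right) \;=\; \exp\!\left(\sum_t \bigl(-\log f_{k,t}(\beta)\bigr)\right).
\end{align*}
This rewriting is where the assumption enters: the hypothesis gives us exactly that each function $\beta \mapsto -\log f_{k,t}(\beta)$ is convex.

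Next I would combine this with two classical preservation results. A finite sum of convex functions is convex, so $\beta \mapsto \sum_t (-\log f_{k,t}(\beta))$ is convex for each $k$. Composition of a convex nondecreasing function with a convex function is convex; applying this with the outer function $\exp$ (which is convex and nondecreasing on $\Reals$) shows that each summand $\beta \mapsto 1/\prod_t f_{k,t}(\beta)$ is convex. Finally, summing over $k$ and once again using that sums of convex functions are convex yields that $L(\beta)$ is convex, as required.

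There is essentially no obstacle here: the entire argument is a two-line manipulation followed by two invocations of basic convex analysis facts (see for example \citet{boyd2004convex}). The only subtlety worth flagging is positivity of each $f_{k,t}$, which is exactly what allows the $\log$-transform and the rewriting as an exponential to be valid; this is guaranteed by the hypothesis $f_i:\Reals^n \to (0,\infty)$.
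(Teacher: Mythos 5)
Your proof is correct and follows essentially the same route as the paper's: rewriting each summand as $\exp\left(\sum_t -\log f_{k,t}(\beta)\right)$ and invoking preservation of convexity under summation and composition with the convex, nondecreasing exponential~\cite{boyd2004convex}. Your only addition is the explicit remark that positivity of the $f_{k,t}$ licenses the $\log$-transform, which the paper leaves implicit.
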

The proof is in \iflong{\cref{apdx:logloss_to_ltsloss}}{Appendix E.1}.
For a policy $\pol(\cdot ; \beta)$ parameterized by $\beta$,
the LTS loss in \cref{eq:loss} 
is $L_{\nodeset'}(\beta)=\sum_{k\in\nodeset'}d(n^k)/\pol(n^k;\beta)$,
and its convexity follows from \cref{thm:logloss_to_inverseloss} by taking $f_{k,0}(\cdot)=1/d(n^k)$,
and $f_{k,t}(\beta)=\pol(n^k_{[t]}| n^k_{[t-1]}; \beta)$
such that $\prod_{t=1}^{d(n^k)}f_{k,t}(\beta) = \pol(n^k; \beta)$.

\Cref{thm:logloss_to_inverseloss} means that many tools of compression and online prediction in the log loss can be transferred to the LTS loss case.
In particular, when there is only one \mutexset{} ($|\patternset|=1$),
the $f_i$ are simple categorical distributions, that is, $f_i(\beta) = \beta_{j_t}$ for some index $j_t$,
and thus $-\log f_i$ is a convex function, so the corresponding LTS loss is convex too.
Unfortunately, the LTS loss function for such a model is convex in $\beta$ only when there is only one mutex set, $|\patternset|=1$.
Fortunately, it becomes convex for $|\patternset|\geq 1$ when we reparameterize the context predictors with $\beta \leadsto \exp \beta$.

Let $\beta_{c, a}\in[\ln\epslow, 0]$ be the value of the parameter of the predictor
for context $c$ for the edge label $a$.
Then the prediction of a context $c$ is defined as
\begin{align}\label{eq:pcabeta}
    \forall a\in\actionset(n): p_c(a; \beta) = \frac{\exp(\beta_{c, a})}{\sum_{a' \in\actionset(n)} \exp(\beta_{c, a'})}\,.
\end{align}
We can also now make precise the definition of $\betaset$:
$\betaset=[\ln\epslow, 0]^{|\contextset|\times A}$,
and note that $p_c(a; \beta) \geq \epslow / |\actionset(n)|$. 
Similarly to geometric mixing~\cite{mattern2013geomix,mattern2016phd}, it can be proven that context models have a convex log loss, and thus their LTS loss is also convex by \cref{thm:logloss_to_inverseloss}.
In \iflong{\cref{sec:LTSconv}}{Appendix E.2} we provide a more direct proof,
and a generalization to the exponential family for finite sets of actions.

Plugging \eqref{eq:pcabeta} into \cref{eq:prodmix} and pushing the probabilities away from 0 with $\epsmix >0$~\cite{orseau2018single} we obtain the policy's probability for a child $n'$ of $n$ (\ie for the action $a(n')$ at node $n$)
with parameters $\beta$:
\begin{align}
    \prodmix(n, a;\beta) &= 
    \frac{\exp(\sum_{c\in\contextset(n)} \beta_{c,a})}{\sum_{a'\in\actionset(n)}\exp\left(\sum_{c\in\contextset(n)} \beta_{c, a'}\right)}\,, \label{eq:prodmix_softmax}\\
    \pol(n'\mid n; \beta) &= (1-\epsmix)\prodmix(n, a(n');\beta) + \frac{\epsmix}{|\actionset(n)|}\,. \label{eq:pol_beta}
\end{align}

\subsection{Optimization}\label{sec:Opt}

We can now give a more explicit form of the LTS loss function of \cref{eq:loss} for context models 
with a dependency on the parameters $\beta$,
for a set of solution nodes $\nodeset'$ assumed to all have different roots:
\begin{align}
    \Loss(\nodeset', \beta) &= 
    \sum_{n\in\nodeset'} \loss(n, \beta)\,,  \label{eq:ltsloss_cm}\\
    \loss(n, \beta) &= \frac{d(n)}{\pol(n ; \beta)}
    ~=~ \frac{d(n)}{\prod_{j=0}^{d(n)-1} \pol(n_{[j+1]}|n_{[j]} ; \beta)} \label{eq:ltsinstantloss_cm}\\
    = d(n)&\prod_{j=0}^{d(n)-1} \!\sum_{a'\in\actionset(n_{[j]})}\!\!\!\!\exp\left(\sum_{c\in\contextset(n_{[j]})} \beta_{c, a'} - \beta_{c, a(n_{[j+1]})}\right)\notag
\end{align}
where $a(n_{[j+1]})$ should be read as the action chosen at step $j$,
and the last equality follows from  \cref{eq:prodmix_softmax,eq:pol_beta}
where we take $\epsmix=0$ during optimization.
Recall that this loss function $\Loss$ gives an upper bound on the total search time (in node expansions) required for LTS to find all the solutions $\nodeset'$
for their corresponding problems (root nodes),
and thus optimizing the parameters corresponds to optimizing the search time.

\subsection{Online Search-and-Learn Guarantees}

Suppose that at each time step $t=1, 2\dots$, the learner receives
a problem $n^0_t$ (a root node)
and uses LTS with parameters $\beta^t\in\betaset$ until it finds a solution node $n_t\in\nodeset^*(n^0_t)$.
The parameters are then updated using $n_t$ (and previous nodes)
and the next step $t+1$ begins.

Let $\nodeset_t= (n_1, \dots, n_t)$ be the sequence of found solution nodes.
For the loss function of \cref{eq:ltsloss_cm},
after $t$ found solution nodes, the optimal parameters \emph{in hindsight}
are $\beta^*_t = \argmin_{\beta\in\betaset} \Loss(\nodeset_t, \beta)$.
We want to know how the learner fares against $\beta^*_t$ --- which is a moving target as $t$ increases.
The \emph{regret}~\cite{hazan2016oco} at step $t$ is the cumulative difference
between the loss incurred by the learner
with its time varying parameters $\beta^i, i=1,2,\dots, t$,
and the loss when using the optimum parameters in hindsight $\beta^*_t$:
\begin{align*}
    \regret(\nodeset_t) = \sum_{i\in[t]} \loss(n_i, \beta^i)  -  L(\nodeset_t,\beta^*_t)\,.
\end{align*}

A straightforward implication of the convexity of \cref{eq:ltsinstantloss_cm} is that we can use 
Online Gradient Descent (OGD) \cite{Zin03} or some of its many variants such as Adagrad~\cite{duchi2011adaptive} and ensure that the algorithm incurs a regret of $\regret(\nodeset_t)=O(|\actionset|\,|\contextset|G\sqrt{t}\ln\frac{1}{\epslow})$,
where $G$ is the largest observed gradient in infinite norm
\footnote{The dependency on the largest gradient can be softened significantly,
\eg with Adagrad and sporadic resets of the learning rates.}
and when using quadratic regularization.
Regret bounds are related to the learning speed (the smaller the bound, the faster the learning),
that is, roughly speaking, how fast the parameters converge to their optimal values for the same sequence of solution nodes.
Such a regret bound (assuming it is tight enough) also allows to observe the impact of the different quantities on the regret, such as the number of contexts $|\contextset|$, or $\epslow$.

OGD and its many variants are computationally efficient as they take $O(d(n)|\actionset|\,|\patternset|)$ computation time per solution node $n$,
but they are not very data efficient, due to the \emph{linearization} of the loss function ---
the so-called `gradient trick'~\cite{cesabianchi2006prediction}.
To make the most of the data, we avoid linearization by sequentially minimizing the full regularized loss function $\Loss(\nodeset_t,\cdot)+R(\cdot)$ where $R(\beta)$ is a convex regularization function. That is, at each step, we set:
\begin{align}\label{eq:beta_full_update}
    \beta^{t+1} = \argmin_{\beta\in\betaset}\Loss(\nodeset_t,\beta)+R(\beta)
\end{align}
which can be solved using standard convex optimization techniques (see \iflong{\cref{apdx:convex_optim}}{Appendix C})~\cite{boyd2004convex}.
This update is known as (non-linearized) Follow the Leader (FTL)
which automatically adapts to local strong convexity and has a fast $O(\log T)$ regret without tuning a learning rate~\cite{shalev2007online}, except that we add regularization to avoid overfitting
which FTL suffers from.
Unfortunately, solving \cref{eq:beta_full_update} even approximately at each step is too computational costly, so we amortize this cost by delaying updates (see below), which of course incurs a learning cost, \eg \cite{joulani2013delayed}.

\section{Experiments}\label{sec:Exp}

As with previous work,
in the experiments we use the LTS algorithm with context models (\cref{alg:ltscm}) within the search-and-learn loop of the Bootstrap process~\cite{ArfaeeZH11}
to solve a dataset of problems,
then test the learned model on a separate test set.
See \iflong{\cref{apdx:bootstrap}}{Appendix A} for more details.
Note that the Bootstrap process is a little different from the online learning setting,
so the theoretical guarantees mentioned above may not carry over strictly --- this analysis is left for future work.,

This allows us to compare LTS with context models (LTS+CM) in particular
with previous results using LTS with neural networks (LTS+NN)
\cite{guez2019planning,orseau2021policy} on three domains.
We also train LTS+CM to solve the Rubik's cube and compare with other approaches.

\newif\iffullresults
\fullresultsfalse

\newcommand{\resultstable}{
\begin{table*}[tb!]
\sisetup{detect-all=true,group-minimum-digits=3,table-align-text-before = false}
    \centering
    \begin{tabular}{llS[table-format=3.2]S[table-format=3.1]S[table-format=7.1]S[table-format=5]}
    \toprule
    Domain & Algorithm &{\%solved} & {Length} & {Expansions} & {Time (ms)} \\
\midrule
\midrule
    Boxoban
           &LTS+CM (this work)                 & 100.00 & 41.7 & \iffullresults\else\bfseries\fi 2132.3 & 124  \\
           &LTS+NN~\protect\cite{orseau2021policy}  & 100.00 & \iffullresults\else\bfseries\fi 40.1 & 2640.4 & 19500  \\
        \iffullresults
           &PHS*~\protect\cite{orseau2021policy}     &    100.00 &      37.6 &     \bfseries 1522.1 & 11300 \\
           &WA*, w=1.5~\protect\cite{orseau2021policy} &    100.00 &      \bfseries 34.5 &     3729.1 & 25500 \\
           \cmidrule{2-6}
           &LTS+CM (this work) @500k                 & 100.00 & 48.5 & \bfseries 858.1 & 55  \\
           &DeepCubeA~\protect\cite{agostinelli2019rubik} & 100.00 & \bfseries 32.9 & 1050.0 & 2350  \\    
        \fi
    \midrule
    The Witness 
            &LTS+CM (this work)                         & \iffullresults\bfseries\fi 100.00 & 15.5 & \bfseries 102.8   & 9  \\
            &LTS+NN~\protect\cite{orseau2021policy}     & \iffullresults\bfseries\fi 100.00 & \bfseries 14.8 & 520.2 & 3200   \\
        \iffullresults
           &PHS*~\protect\cite{orseau2021policy}        & \iffullresults\bfseries\fi 100.00 &      15.0 &      408.1 &  3000 \\
            &WA*, w=1.5~\protect\cite{orseau2021policy} &     99.90 &      {\textit{\hspace{1ex}14.6}} &    \multicolumn{1}{r}{\textit{18\,345.2}} & {\textit{71500}} \\
        \fi
    \midrule
    STP (24-puzzle) 
            &LTS+CM (this work)   & \bfseries 100.00  & 211.2 & \iffullresults\else\bfseries\fi 5667.4 & 236  \\
            &LTS+NN~\protect\cite{orseau2021policy}   & 0.90 & {\itshape 145.1} &
            \multicolumn{1}{r}{\hfill\itshape 39\,005.6}
            & \multicolumn{1}{r}{\hfill\itshape 31\,100\phantom{.0}} \\
        \iffullresults
            &PHS*~\protect\cite{orseau2021policy}       &   \bfseries 100.00 &     224.0 &     2867.2 &  2800 \\
            &WA*, w=1.5~\protect\cite{orseau2021policy} &   \bfseries 100.00 &     129.8 & \bfseries 1989.8 &  1600 \\
           \cmidrule{2-6}
           &DeepCubeA~\protect\cite{agostinelli2019rubik} & \bfseries 100.00 & \bfseries 89.5 & 6440000.0 & 19330  \\    
        \fi
\midrule
    \midrule
    Boxoban hard
           &LTS+CM (this work)         & \bfseries 100.00 & \iffullresults\bfseries\fi 67.8 & 48058.6 & 3275 \\
    \iffullresults
           &LTS+CM (this work) @500k   & \bfseries 100.00 & 72.5 & \bfseries \bfseries 12166.2 & 761 \\
    \fi
           &LTS+NN \protect\cite{guez2019planning} & 94.00 & \multicolumn{1}{r}{n/a\phantom{.}}  & \multicolumn{1}{r}{n/a} & {\hfill\textit{3\,600}\phantom{.}} \\
           &ExPoSe~\protect\cite{mittal2022expose} & 97.30 & \multicolumn{1}{r}{n/a\phantom{.}}  & \multicolumn{1}{r}{n/a}       & \multicolumn{1}{r}{n/a\phantom{.0}}  \\
    \midrule
    \iffullresults
    Rubik's cube 
           &LTS+CM (this work) @300k   & 100.00 & 39.8 &  247862.4 & 18949 \\
           &LTS+CM (this work) @400k   & 100.00 & 53.3 &   20647.9 &   950 \\
        \fi
    \iffullresults\else Rubik's cube\fi
           &LTS+CM (this work) \iffullresults@5M\fi     & 100.00 & 81.7 &  \bfseries 498.0 &  16 \\
           &DeepCubeA~\protect\cite{agostinelli2019rubik} & 100.00 & \bfseries 21.5 & {$\sim$}600000.0 & 24220  \\
        \iffullresults
           \cmidrule{2-6}
           &LTS+CM (this work) \iffullresults@5M\fi     & 100.00 & \bfseries 78.6 & \bfseries 431.7 & 16 \\
        \fi
           &GBFS(A+M)~\protect\cite{allen2021focused_macros} & 100.00 & 378.0 & {\textdagger}171300.0 & \multicolumn{1}{r}{n/a\phantom{.0}} \\
    \bottomrule
\end{tabular}
    \caption{
    \iffullresults
    More test results.
    See \cref{tab:results} and the text in \cref{apdx:results} for more information.
    The line splits in Boxoban and STP are because the second group uses different training sets from the rest.    
    The test set used by DeepCubeA for STP is different from that of LTS+\{CM,NN\},
    but we expect the comparison to be meaningful anyway.
    \textdagger Does not account for the cost of macro-actions.
    \else
    Results on the test sets.
    The last 3 columns are the averages over the test instances.
    The first three domains allow for a fair comparison between LTS with context models 
    and LTS with neural networks~\protect\cite{orseau2021policy}
    using the same 50k training instances and initial budget.
    For the last two domains, comparison to prior work is more cursory and is provided for information only, in particular because the objective of DeepCubeA is to provide near-optimal-length solutions rather than fast solutions.
    The values for LTS+\{CM,NN\} all use a single CPU, no GPU (except for LTS+NN~\protect\cite{guez2019planning}).
    DeepCubeA uses four high-end GPU cards.
    More results can be found in \iflong{\cref{tab:fullresults}}{Table 2} in \iflong{\cref{apdx:results}}{Appendix H}.
    \textdagger Does not account for the cost of macro-actions.
    \fi
    }
    \iffullresults\label{tab:fullresults}\else\label{tab:results}\fi
\end{table*}
}

\resultstable

\paragraph{LTS+NN's domains.}
We foremost compare LTS with context models (LTS+CM) with LTS with a convolutional neural network~\cite{orseau2021policy} (LTS+NN) on the three domains where the latter was tested:
(a) Sokoban (Boxoban)~\cite{boxobanlevels} on the standard 1000 test problems, a PSPACE-hard puzzle~\cite{Culberson1999} where the player must push boxes onto goal positions while avoiding deadlocks, 
(b) The Witness, a color partitioning problem that is NP-hard in general~\cite{abel2020witness}, and 
(c) the 24 ($5\times 5$) sliding-tile puzzle (STP), a sorting problem on a grid, for which finding short solutions is also NP-hard~\cite{ratner1986puzzle}.
As in previous work, we train LTS+CM on the same datasets of \num{50000} problems each, with the same initial budget (\num{2000} node expansions for Sokoban and The Witness, \num{7000} for STP)
and stop as soon as the training set is entirely solved.
Training LTS+CM for these domains took less than 2 hours each.

\paragraph{Harder Sokoban.}

Additionally, we compare algorithms on the Boxoban `hard' set of \num{3332} problems.
\citet{guez2019planning} trained 
a convLSTM network on the medium-difficulty dataset (450k problems) with a standard actor-critic setup --- not the LTS loss --- and used LTS (hence LTS+NN) at test time.
The more recent ExPoSe algorithm~\cite{mittal2022expose} updates the parameters of a policy neural network
\footnote{The architecture of the neural network was not specified.}
\emph{during} the search, and is trained on both the medium set (450k problems) and the `unfiltered' Boxoban set (900k problems) with solution trajectories obtained from an A* search.

\paragraph{Rubik's Cube.}

We also use LTS+CM to learn a fast policy for the Rubik's cube, with an initial budget of $B_1=21000$. 
We use a sequence of datasets containing  100k problems each,
generated with a random walk of between $m$ and $m'=m+5$ moves from the solution,
where $m$ increases by steps of 5 from 0 to 50, after which 
we set $m'=m=50$ for each new generated set.
DeepCubeA~\cite{agostinelli2019rubik} uses a fairly large neural network to learn 
in a supervised fashion from trajectories generated with a backward model of the environment,
and Weighted A* is used to solve random test cubes.
Their goal is to learn a policy that returns solutions of near-optimal length.
By contrast, our goal is to learn a fast-solving policy.
\citet{allen2021focused_macros} takes a completely different approach (no neural network) by learning a set of `focused macro actions' which are meant to change the state as little as possible
so as to mimic the so-called `algorithms' that human experts use to solve the Rubik's cube. 
They use a rather small budget of 2 million actions to learn the macro actions, 
but also use the more informative goal-count scoring function (how many variables of the state have the correct value), while we only assume access to the more basic solved/unsolved  function.
As with previous work, we report solution lengths in the quarter-turn metric.
Our test set contains 1000 cubes scrambled 100 times each --- this is likely
more than enough to generate random cubes~\cite{korf1997cube} --- and we expect the difficulty
to match that of previous work.

\paragraph{Machine description.}
We used a single EPYC 7B12 (64 cores, 128 threads) server with 512GB of RAM without GPU.
During training and testing, 64 problems are attempted concurrently --- one problem per CPU core.
Optimization uses 64 threads to calculate the loss, gradient and updates.

\paragraph{Hyperparameters.}
For all experiments we use $\epslow=10^{-4}$,
$\epsmix=10^{-3}$, a quadratic regularization $R(\beta)=5\|\beta-\beta_0\|^2$
where $\beta_0=(1-1/A)\ln\epslow$
(see \iflong{\cref{apdx:betasimplex}}{Appendix F}).
The convex optimization algorithm we use to solve \cref{eq:beta_full_update} is detailed in \iflong{\cref{apdx:convex_optim}}{Appendix C}.

\begin{figure}[tbh!]
    \centering
    \begin{tikzpicture}
\newcommand{\wallcell}[2]{\fill[darkgray] (#1,#2) rectangle (#1+1,#2+1)}
\newcommand{\emptycell}[2]{\draw[thick] (#1,#2) rectangle (#1+1,#2+1)}
\newcommand{\applepic}[2]{\draw[solid, fill=darkgreen] (#1+.5,#2+.5) circle (.4)}

\newcommand{\grid}[4]{
\begin{scope}[scale=0.5]
\fill[darkgray] (0,0) rectangle (6,6);
\fill[white] (1,1) rectangle (5,5);
\wallcell{2}{5};
\wallcell{3}{4};
\wallcell{3}{2};
\wallcell{3}{1};
\wallcell{2}{4};
\wallcell{1}{2};
\wallcell{0}{2};

\foreach \x in {0,1,2,3,4,5} {
  \foreach \y in {0,1,2,3,4,5} {
    \emptycell{\x}{\y};
  }
}

\def\xplay{1}
\def\yplay{3}

\draw[red, thick, fill=red] (\xplay+.1,\yplay+.1) -- (\xplay+.5,\yplay+.9) -- (\xplay+.9,\yplay+.1) -- cycle;

\def\xapple{1}
\def\yapple{1}
\applepic{\xapple}{\yapple};
\fill[white,opacity=0.8] (0,0) rectangle (#1,6);
\fill[white,opacity=0.8] (#3+1,0) rectangle (6,6);
\fill[white,opacity=0.8] (#1,0) rectangle (#3+1,#2);
\fill[white,opacity=0.8] (#1,#4+1) rectangle (#3+1,6);
\draw[darkorange,thick] (#1,#2) rectangle (#3+1,#4+1);

\end{scope}
}

\def\xsep{3.5}
\def\ysep{-3.5}

\begin{scope}[xshift=0,scale=0.47]
    \begin{scope}[yshift=\ysep*0cm]
        \grid{-2}{3}{0}{4}
        \begin{scope}[xshift=\xsep*1cm]
        \grid{-1}{3}{1}{4}
        \end{scope}
        \begin{scope}[xshift=\xsep*2cm]
        \grid{0}{3}{2}{4}
        \end{scope}
        \begin{scope}[xshift=\xsep*3cm]
        \grid{1}{3}{3}{4}
        \end{scope}
        \begin{scope}[xshift=\xsep*4cm]
        \grid{2}{3}{4}{4}
        \end{scope}
    \end{scope}

    \begin{scope}[yshift=\ysep*1cm]
        \grid{-2}{2}{0}{3}
        \begin{scope}[xshift=\xsep*1cm]
        \grid{-1}{2}{1}{3}
        \end{scope}
        \begin{scope}[xshift=\xsep*2cm]
        \grid{0}{2}{2}{3}
        \end{scope}
        \begin{scope}[xshift=\xsep*3cm]
        \grid{1}{2}{3}{3}
        \end{scope}
        \begin{scope}[xshift=\xsep*4cm]
        \grid{2}{2}{4}{3}
        \end{scope}
    \end{scope}

\end{scope}

\end{tikzpicture}
    \caption{Example of a relative tiling
    of row span 2, column span 3, at maximum row distance 1 and maximum column distance 3 around the agent (red triangle).
    Each orange rectangle is a mutex set of at most $4^{6}$ different contexts.
    A padding value can be chosen arbitrarily (such as the wall value) for
    cells outside the grid.}
    \label{fig:tiling}
\end{figure}

\paragraph{Mutex sets.}
For Sokoban, STP, and The Witness we use
several mutex sets of
rectangular shapes at various distances around the agent
(the player in Sokoban, the tip of the `snake' in The Witness, the blank in STP),
which we call \emph{relative tilings}.
An example of relative tiling is given in \cref{fig:tiling},
and a more information can be found in \iflong{\cref{apdx:relative_tilings}}{Appendix G}.
For the Rubik's cube, each \mutexset{} $\{i, j\}$ corresponds to the ordered colors of the two cubies (the small cubes that make up the Rubik's cube) at location $i$ and $j$ (such as the up-front-right corner 
and the back-right edge).
There are 20 locations, hence 190 different \mutexsets{}, and each of them contains at most $24^2$ contexts (there are 8 corner cubies, each with 3 possible orientations, and 12 side cubies, each with 2 possible orientations). 
For all domains, to these mutex sets we add one mutex set for the last action, indicating the action the agent performed to reach the node;
for Sokoban this includes whether the last action was a push. The first 3 domains all have 4 actions (up, down, left, right), and the Rubik's cube has 12 actions (a rotation of each face, in either direction).

\paragraph{Results.}
The algorithms are tested on test sets that are separate from the training sets, see \cref{tab:results}.
For the first three domains, LTS+CM performs better than LTS+NN, even solving all test instances of the STP while LTS+NN solves less than 1\% of them.
On The Witness, LTS+CM learns a policy that allows it to expand 5 times fewer nodes than LTS+NN.
LTS+CM also solves all instances of the Boxoban hard set, by contrast to previous published work, and despite being trained only on 50k problems.
On the Rubik's cube,
LTS+CM learns a policy that is hundreds of times faster than previous work --- though recall that DeepCubeA's objective of finding short solutions differs from ours.
This may be surprising given how simple the contexts are --- each context `sees' only two cubies --- and is a clear sign that product mixing is taking full advantage of the learned individual context predictions.

\section{Conclusion}\label{sec:Conc}

We have devised a parameterized policy for the Levin Tree Search (LTS) algorithm
using product-of-experts of context models that ensures that the LTS loss function is convex.
While neural networks --- where convexity is almost certainly lost --- have achieved impressive results recently, we show that our algorithm is competitive with published results, if not better.

Convexity allows us in particular to use convex optimization algorithms
and to provide regret guarantees in the online learning setting. 
While this provides a good basis to work with, 
this notion of regret holds against any competitor that learns from the same set of \emph{solution} nodes.
The next question is how we can obtain an online search-and-learn regret guarantee against a competitor for the same set of \emph{problems} (root nodes),
for which the cumulative LTS loss is minimum across all sets of solution nodes for the same problems.
And, if this happens to be unachievable, what intermediate regret setting could be considered?
We believe these are important open research questions to tackle.

We have tried to design \mutexsets{} that use only basic domain-specific knowledge (the input representation of agent-centered grid-worlds, or the cubie representation of the Rubik's cube), but in the future it would be interesting to also \emph{learn to search} the space of possible context models --- this would likely require more training data.

LTS with context models, as presented here, cannot directly make use
of a value function or a heuristic function, however they could
either be binarized into multiple mutex sets,
or be used as in PHS*~\cite{orseau2021policy} to estimate the LTS cost at the solution,
or be used as features since the loss function would still be convex (see \iflong{\cref{apdx:convex_optim}}{Appendix C}).

\appendix

\section*{Acknowledgments}
We would like to thank the following people for their useful help and feedback:
 Csaba Szepesvari,
 Pooria Joulani,
 Tor Lattimore,
 Joel Veness,
 Stephen McAleer.

The following people also helped with Racket-specific questions:
 Matthew Flatt,
 Sam Tobin-Hochstadt,
 Bogdan Popa,
 Jeffrey Massung,
 Jens Axel S{\o}gaard,
 Sorawee Porncharoenwase,
 Jack Firth,
 Stephen De Gabrielle,
 Alex Hars\'anyi,
 Shu-Hung You,
 and the rest of the quite helpful and reactive Racket community.

This research was supported by Canada's NSERC and the CIFAR AI Chairs program.

\bibliographystyle{named}
\bibliography{biblio}

\begin{thebibliography}{}

\bibitem[\protect\citeauthoryear{Abel \bgroup \em et al.\egroup
  }{2020}]{abel2020witness}
Zachary Abel, Jeffrey Bosboom, Michael~J. Coulombe, Erik~D. Demaine, Linus
  Hamilton, Adam Hesterberg, Justin Kopinsky, Jayson Lynch, Mikhail Rudoy, and
  Clemens Thielen.
\newblock Who witnesses the witness? finding witnesses in the witness is hard
  and sometimes impossible.
\newblock {\em Theor. Comput. Sci.}, 839:41--102, 2020.

\bibitem[\protect\citeauthoryear{Agostinelli \bgroup \em et al.\egroup
  }{2019}]{agostinelli2019rubik}
Forest Agostinelli, Stephen McAleer, Alexander Shmakov, and Pierre Baldi.
\newblock Solving the rubik’s cube with deep reinforcement learning and
  search.
\newblock {\em Nature Machine Intelligence}, 1, 07 2019.

\bibitem[\protect\citeauthoryear{Agostinelli \bgroup \em et al.\egroup
  }{2021}]{agostinelli2021expansions}
Forest Agostinelli, Alexander Shmakov, Stephen McAleer, Roy Fox, and Pierre
  Baldi.
\newblock A* search without expansions: Learning heuristic functions with deep
  q-networks, 2021.
\newblock arXiv 2102.04518.

\bibitem[\protect\citeauthoryear{Allen \bgroup \em et al.\egroup
  }{2021}]{allen2021focused_macros}
Cameron Allen, Michael Katz, Tim Klinger, George Konidaris, Matthew Riemer, and
  Gerald Tesauro.
\newblock Efficient black-box planning using macro-actions with focused
  effects.
\newblock In {\em Proceedings of the Thirtieth International Joint Conference
  on Artificial Intelligence}, pages 4024--4031, 2021.

\bibitem[\protect\citeauthoryear{Boyd and Vandenberghe}{2004}]{boyd2004convex}
Stephen Boyd and Lieven Vandenberghe.
\newblock {\em Convex Optimization}.
\newblock Cambridge University Press, Cambridge, England, 2004.

\bibitem[\protect\citeauthoryear{B{\"u}chner \bgroup \em et al.\egroup
  }{2022}]{buchner2022a}
Clemens B{\"u}chner, Patrick Ferber, Jendrik Seipp, and Malte Helmert.
\newblock A comparison of abstraction heuristics for rubik's cube.
\newblock In {\em ICAPS 2022 Workshop on Heuristics and Search for
  Domain-independent Planning}, 2022.

\bibitem[\protect\citeauthoryear{Budden \bgroup \em et al.\egroup
  }{2020}]{budden2020gated}
David Budden, Adam Marblestone, Eren Sezener, Tor Lattimore, Gregory Wayne, and
  Joel Veness.
\newblock Gaussian gated linear networks.
\newblock In {\em Advances in Neural Information Processing Systems},
  volume~33, pages 16508--16519. Curran Associates, Inc., 2020.

\bibitem[\protect\citeauthoryear{Cesa-Bianchi and
  Lugosi}{2006}]{cesabianchi2006prediction}
Nicolo Cesa-Bianchi and Gabor Lugosi.
\newblock {\em Prediction, Learning, and Games}.
\newblock Cambridge University Press, New York, NY, USA, 2006.

\bibitem[\protect\citeauthoryear{Cortés}{2006}]{cortes2006ngd}
Jorge Cortés.
\newblock Finite-time convergent gradient flows with applications to network
  consensus.
\newblock {\em Automatica}, 42(11):1993--2000, 2006.

\bibitem[\protect\citeauthoryear{Culberson and
  Schaeffer}{1998}]{culberson1998pattern}
Joseph~C. Culberson and Jonathan Schaeffer.
\newblock Pattern databases.
\newblock {\em Computational Intelligence}, 14(3):318--334, 1998.

\bibitem[\protect\citeauthoryear{Culberson}{1999}]{Culberson1999}
Joseph~C. Culberson.
\newblock {S}okoban is {PSPACE}-{C}omplete.
\newblock In {\em Fun With Algorithms}, pages 65--76, 1999.

\bibitem[\protect\citeauthoryear{Duchi \bgroup \em et al.\egroup
  }{2011}]{duchi2011adaptive}
John Duchi, Elad Hazan, and Yoram Singer.
\newblock Adaptive subgradient methods for online learning and stochastic
  optimization.
\newblock {\em Journal of Machine Learning Research}, 12(61):2121--2159, 2011.

\bibitem[\protect\citeauthoryear{Ebendt and
  Drechsler}{2009}]{ebendt2009weighted}
R{\"u}diger Ebendt and Rolf Drechsler.
\newblock Weighted a* search – unifying view and application.
\newblock {\em Artificial Intelligence}, 173(14):1310 -- 1342, 2009.

\bibitem[\protect\citeauthoryear{Frank and Wolfe}{1956}]{frank1956quadratic}
Marguerite Frank and Philip Wolfe.
\newblock An algorithm for quadratic programming.
\newblock {\em Naval Research Logistics Quarterly}, 3(1-2):95--110, 1956.

\bibitem[\protect\citeauthoryear{Guez \bgroup \em et al.\egroup
  }{2018}]{boxobanlevels}
Arthur Guez, Mehdi Mirza, Karol Gregor, Rishabh Kabra, Sebastien Racaniere,
  Theophane Weber, David Raposo, Adam Santoro, Laurent Orseau, Tom Eccles, Greg
  Wayne, David Silver, Timothy Lillicrap, and Victor Valdes.
\newblock An investigation of model-free planning: boxoban levels.
\newblock \url{https://github.com/deepmind/boxoban-levels/}, 2018.
\newblock Accessed: 2023-05-01.

\bibitem[\protect\citeauthoryear{Guez \bgroup \em et al.\egroup
  }{2019}]{guez2019planning}
Arthur Guez, Mehdi Mirza, Karol Gregor, Rishabh Kabra, Sebastien Racaniere,
  Theophane Weber, David Raposo, Adam Santoro, Laurent Orseau, Tom Eccles, Greg
  Wayne, David Silver, and Timothy Lillicrap.
\newblock An investigation of model-free planning.
\newblock In {\em Proceedings of the 36th International Conference on Machine
  Learning}, volume~97 of {\em Proceedings of Machine Learning Research}, pages
  2464--2473. PMLR, 2019.

\bibitem[\protect\citeauthoryear{Hazan}{2016}]{hazan2016oco}
Elad Hazan.
\newblock Introduction to online convex optimization.
\newblock {\em Foundations and Trends{\textregistered} in Optimization},
  2(3-4):157--325, 2016.

\bibitem[\protect\citeauthoryear{Hinton}{2002}]{hinton2002poe}
Geoffrey~E. Hinton.
\newblock {Training Products of Experts by Minimizing Contrastive Divergence}.
\newblock {\em Neural Computation}, 14(8):1771--1800, 08 2002.

\bibitem[\protect\citeauthoryear{{Jabbari Arfaee} \bgroup \em et al.\egroup
  }{2011}]{ArfaeeZH11}
S.~{Jabbari Arfaee}, S.~Zilles, and R.~C. Holte.
\newblock Learning heuristic functions for large state spaces.
\newblock {\em Artificial Intelligence}, 175(16-17):2075--2098, 2011.

\bibitem[\protect\citeauthoryear{Jaggi}{2013}]{jaggi2013duality}
Martin Jaggi.
\newblock Revisiting {Frank-Wolfe}: Projection-free sparse convex optimization.
\newblock In {\em Proceedings of the 30th International Conference on Machine
  Learning}, volume 28(1) of {\em Proceedings of Machine Learning Research},
  pages 427--435, Atlanta, Georgia, USA, 17--19 Jun 2013. PMLR.

\bibitem[\protect\citeauthoryear{Joulani \bgroup \em et al.\egroup
  }{2013}]{joulani2013delayed}
Pooria Joulani, Andras Gyorgy, and Csaba Szepesvari.
\newblock Online learning under delayed feedback.
\newblock In {\em Proceedings of the 30th International Conference on Machine
  Learning}, volume 28(3) of {\em Proceedings of Machine Learning Research},
  pages 1453--1461. PMLR, 2013.

\bibitem[\protect\citeauthoryear{Korf}{1997}]{korf1997cube}
Richard~E. Korf.
\newblock Finding optimal solutions to rubik's cube using pattern databases.
\newblock In {\em Proceedings of the Fourteenth National Conference on
  Artificial Intelligence and Ninth Conference on Innovative Applications of
  Artificial Intelligence}, AAAI'97/IAAI'97, page 700–705. AAAI Press, 1997.

\bibitem[\protect\citeauthoryear{Mattern}{2013}]{mattern2013geomix}
Christopher Mattern.
\newblock Linear and geometric mixtures - analysis.
\newblock {\em Proceedings of the Data Compression Conference}, pages 301--310,
  02 2013.

\bibitem[\protect\citeauthoryear{Mattern}{2016}]{mattern2016phd}
Christopher Mattern.
\newblock {\em On Statistical Data Compression}.
\newblock PhD thesis, Technische Universit\"at Ilmenau, Fakult\"at f\"ur
  Informatik und Automatisierung, Feb 2016.

\bibitem[\protect\citeauthoryear{Matthew}{2005}]{matthew2005adaptive}
V~Mahoney Matthew.
\newblock Adaptive weighing of context models for lossless data compression.
\newblock {\em Florida Institute of Technology CS Dept, Technical Report
  CS-2005-16, https://www. cs. fit. edu/Projects/tech\_reports/cs-2005-16.
  pdf}, 2005.

\bibitem[\protect\citeauthoryear{Mittal \bgroup \em et al.\egroup
  }{2022}]{mittal2022expose}
Dixant Mittal, Siddharth Aravindan, and Wee~Sun Lee.
\newblock Expose: Combining state-based exploration with gradient-based online
  search, 2022.
\newblock arXiv 2202.01461.

\bibitem[\protect\citeauthoryear{Nesterov}{1983}]{nesterov1983agd}
Yurii Nesterov.
\newblock A method for solving the convex programming problem with convergence
  rate $o(1/k^2)$.
\newblock {\em Proceedings of the USSR Academy of Sciences}, 269:543--547,
  1983.

\bibitem[\protect\citeauthoryear{Orabona and Pál}{2018}]{orabona2018solo}
Francesco Orabona and Dávid Pál.
\newblock Scale-free online learning.
\newblock {\em Theoretical Computer Science}, 716:50--69, 2018.

\bibitem[\protect\citeauthoryear{Orseau and Hutter}{2021}]{orseau2021isotuning}
Laurent Orseau and Marcus Hutter.
\newblock Isotuning with applications to scale-free online learning, 2021.

\bibitem[\protect\citeauthoryear{Orseau and Lelis}{2021}]{orseau2021policy}
Laurent Orseau and Levi H.~S. Lelis.
\newblock Policy-guided heuristic search with guarantees.
\newblock {\em Proceedings of the AAAI Conference on Artificial Intelligence},
  35(14):12382--12390, May 2021.

\bibitem[\protect\citeauthoryear{Orseau \bgroup \em et al.\egroup
  }{2018}]{orseau2018single}
Laurent Orseau, Levi Lelis, Tor Lattimore, and Theophane Weber.
\newblock Single-agent policy tree search with guarantees.
\newblock In S.~Bengio, H.~Wallach, H.~Larochelle, K.~Grauman, N.~Cesa-Bianchi,
  and R.~Garnett, editors, {\em Advances in Neural Information Processing
  Systems}, volume~31. Curran Associates, Inc., 2018.

\bibitem[\protect\citeauthoryear{Pearl}{1984}]{pearl1984heuristics}
Judea Pearl.
\newblock {\em Heuristics: Intelligent Search Strategies for Computer Problem
  Solving}.
\newblock Addison-Wesley Longman Publishing Co., Inc., USA, 1984.

\bibitem[\protect\citeauthoryear{Pohl}{1970}]{pohl1970heuristic}
Ira Pohl.
\newblock Heuristic search viewed as path finding in a graph.
\newblock {\em Artificial Intelligence}, 1(3):193 -- 204, 1970.

\bibitem[\protect\citeauthoryear{Ratner and Warmuth}{1986}]{ratner1986puzzle}
Daniel Ratner and Manfred Warmuth.
\newblock Finding a shortest solution for the nxn extension of the 15-puzzle is
  intractable.
\newblock In {\em Proceedings of the Fifth AAAI National Conference on
  Artificial Intelligence}, AAAI'86, page 168–172. AAAI Press, 1986.

\bibitem[\protect\citeauthoryear{Rissanen}{1983}]{rissanen1983universal}
J.~Rissanen.
\newblock A universal data compression system.
\newblock {\em IEEE Transactions on Information Theory}, 29(5):656--664, 1983.

\bibitem[\protect\citeauthoryear{Shalev-{S}hwartz}{2007}]{shalev2007online}
Shai Shalev-{S}hwartz.
\newblock {\em Online learning: Theory, algorithms, and applications}.
\newblock PhD thesis, Hebrew University, Jerusalem, 2007.

\bibitem[\protect\citeauthoryear{Sutton and
  Barto}{1998}]{sutton1998reinforcement}
Richard~S. Sutton and Andrew~G. Barto.
\newblock {\em Reinforcement Learning : An Introduction}.
\newblock MIT Press, 1998.

\bibitem[\protect\citeauthoryear{Truong and
  Nguyen}{2021}]{truong2021backtracking}
Tuyen Truong and Hang-Tuan Nguyen.
\newblock Backtracking gradient descent method and some applications in large
  scale optimisation. part 2: Algorithms and experiments.
\newblock {\em Applied Mathematics \& Optimization}, 84:1--30, 12 2021.

\bibitem[\protect\citeauthoryear{Veness \bgroup \em et al.\egroup
  }{2017}]{veness2017gln}
Joel Veness, Tor Lattimore, Avishkar Bhoopchand, Agnieszka Grabska-Barwinska,
  Christopher Mattern, and Peter Toth.
\newblock Online learning with gated linear networks, 2017.
\newblock arXiv 1712.01897.

\bibitem[\protect\citeauthoryear{Veness \bgroup \em et al.\egroup
  }{2021}]{veness2021gln}
Joel Veness, Tor Lattimore, David Budden, Avishkar Bhoopchand, Christopher
  Mattern, Agnieszka Grabska-Barwinska, Eren Sezener, Jianan Wang, Peter Toth,
  Simon Schmitt, and Marcus Hutter.
\newblock Gated linear networks.
\newblock {\em Proceedings of the AAAI Conference on Artificial Intelligence},
  35(11):10015--10023, May 2021.

\bibitem[\protect\citeauthoryear{Willems \bgroup \em et al.\egroup
  }{1995}]{willems1995ctw}
Frans M.~J. Willems, Yuri~M. Shtarkov, and Tjalling~J. Tjalkens.
\newblock The context tree weighting method: Basic properties.
\newblock {\em IEEE Transactions on Information Theory}, 41:653--664, 1995.

\bibitem[\protect\citeauthoryear{Zinkevich}{2003}]{Zin03}
Martin Zinkevich.
\newblock Online convex programming and generalized infinitesimal gradient
  ascent.
\newblock In {\em Proceedings of the Twentieth International Conference on
  Machine Learning}, pages 928--935, 2003.

\end{thebibliography}

\ifarxiv
\clearpage

\section{Bootstrap Algorithm Details}\label{apdx:bootstrap}

\begin{algorithm}
\caption{Bootstrap using \code{LTS_CM} (given in \cref{alg:ltscm}),
which returns \code{"budget_reached"} when the number of nodes expanded reaches $B_t$,
or returns a solution node $n^*\in \nodeset^*(n^0)$
if it reaches $n^*$,
or returns \code{"no_solution"} if all nodes have been expanded without exhausting the budget and without reaching a solution node, which means that the problem has no solution.
}
\label{alg:bootstrap}
\begin{lstlisting}
# $\nodeset^0$: set of root nodes = problems
# $B_1$: initial budget
# $\pol_1$: initial policy
# Returns the set of solution nodes
def Bootstrap_with_LTS($\nodeset^0$, $B_1$, $\pol_1$):
  solns = {} # dictionary of problem -> solution
  for t = 1, 2, ...:
    for each $n^0 \in \nodeset^0$:
      result = LTS_CM($n^0$, $B_t$, $\beta_t$) # search
      if result is "no_solution": $\nodeset^0 \leftarrow \nodeset^0 \setminus\{n^0\}$
      if result is a node $n^*$: soln[$n^0$] = $n^*$
    if len(soln) = $|\nodeset^0|$: return soln
    # Update the parameters of the model
    $\beta_{t+1} \approx \argmin_{\beta\in\betaset} \Loss($soln.values()$,\beta) + R(\beta)$
    choose budget $B_{t+1}$
\end{lstlisting}
\end{algorithm}

\citet{orseau2021policy}
use a variant of the Bootstrap process~\cite{ArfaeeZH11}
to iteratively solve a set of problems while improving the policy based on the solutions for the already solved problems.
See \cref{alg:bootstrap}.

At each Bootstrap iteration $t$, LTS is run on each problem (even those already solved) with a budget of $B_t$ node expansions with the context-model policy with current parameters $\beta^t$.
After collecting the set of solutions $\nodeset_t$,
the parameters $\beta^{t+1}$ are obtained from \cref{eq:beta_full_update} to some approximation (see \cref{apdx:convex_optim}),
and the next bootstrap iteration $t+1$ is started with budget $B_{t+1}$.

Adjusting the budget is non trivial.
Keep in mind that computation time during search is proportional to the number of expansions.
Solving previously solved problems usually is fast, because the policy has been optimized for them.
Each problem for which a solution is newly found usually takes a large fraction of the budget
(since they couldn't be solved for the previous budget),
and every problem that remains unsolved consumes the whole budget.
While a larger budget means that more problems can be solved, for a fixed set of parameters it is usual to see this number grow only logarithmically with the budget
(since we are tackling hard problems).
Hence when only few problems have already been solved, a large budget will make the algorithm spend a lot of time in yet-unsolvable problems, wasting computation time.
By contrast, a too small budget will prevent finding new solutions and improving the policy, requiring more Bootstrap iterations.

\citet{ArfaeeZH11} double the budget at each new Bootstrap iteration. This can become wasteful in computation time if learning works well, but a larger budget does not help much, in which case it may be better to use a constant budget.
It may also be not fast enough during the last iterations: suppose 95\% of the problems are solved, but the remaining 5\% ones require to double the budget $k$ more times:
then the 95\% will be resolved $k$ more times (possibly finding different solutions), and, if the found solutions change, optimization is also performed $k$ times before any new problem can be solved.
By contrast, \citet{orseau2021policy} use a fixed budget and double the budget
only if no new problem is solved, which can also be wasteful in computation time if learning does not manage to work well enough and just one more problem is solved at each step.

To alleviate these issues, first, if more than a factor $(1+b)$ (say $b=1/4$)
of problems are solved at iteration $t$ compared to the previous iteration
--- formally, $|\nodeset_{t}| \geq (1+b)\left|\bigcup_{t'<t}\nodeset_{t'}\right|$ ---
we reduce the next budget in case the current budget is too high:
\begin{align*}
    B_{t+1} = \max\{B_1, B_{t}/2\}\,.
\end{align*}
Otherwise, we increase $B_{t+1}$ so as to approximately double the number of total expansions (in the worst case of no new problem solved), rather than merely doubling the budget.
More precisely,
at Bootstrap iteration $t$,
say the total number of expansions of \emph{solved} problems is $T^+_t$, 
and the number of remaining unsolved problems is $s^-_t = |\nodeset^* \setminus\bigcup_{t'\leq t}\nodeset_{t'}|$,
then we set the next budget 
\begin{align*}
    B_{t+1} = 2B_t + T^+_t/s^-_t\,,
\end{align*}
which ensures that $T^+_t + s^-_tB_{t+1} = 2(T^+_t + s^-_tB_t)$,
where $T^+_t + s^-_tB_t$ is the actual number of expansions used during iteration $t$,
and $T^+_t + s^-_tB_{t+1}$ is the probable number of expansions in case no new problem is solved, and assuming that previous problems take about the same time to be solved again 
(which is likely to be an overestimate due to learning).

\section{Formal Statement of the Lower Bound}\label{apdx:lower_bound}

In this section we provide a formal version of the informal lower bound of \cref{thm:informal_lower_bound} on the number of node expansions required before reaching a target node $n^*$.
This number is within a factor $(A-1)\bar d$ of the upper bound, showing that the upper bound is quite tight and can be meaningfully used as a loss function.

The lower bound in \Cref{thm:lower_bound} below requires the following lemma,
which shows that the probability mass at the root behaves like a liter of water
that is distributed recursively (but unevenly) along all the branches, and that if we collect the water at all the leaves (assuming a finite tree) then it still amounts to one liter, as long as the policy is proper.
This lemma can be found in a compact form in the proof of Theorem 3~\cite{orseau2018single}.

A tree $\nodeset'\subseteq\nodeset$ is said to be \emph{full}
if every node of the tree either has all its children in the tree, or none of them.

\begin{lemma}\label{lem:sum_to_one}
Let $\nodeset'\subseteq\nodeset$ be a finite tree with root $n_0$, and let $\leafset'\subseteq\nodeset'$ be its leaves.
Let $\pol$ be a policy with $\pol(n_0)=1$.
Then $\sum_{n\in\leafset'} \pol(n) \leq 1$.
Furthermore, if the policy is proper and $\nodeset'$ is a full tree, then 
$\sum_{n\in\leafset'} \pol(n) = 1$.
\end{lemma}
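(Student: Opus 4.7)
The plan is to prove \cref{lem:sum_to_one} by induction on the number of internal (non-leaf) nodes of the finite tree $\nodeset'$. The intuition is the one given just before the statement: the unit of probability mass at the root is (sub-)distributed among the children at each internal node via $\pi(\cdot\mid n)$, whose values sum to at most $1$ (and to exactly $1$ when $\pi$ is proper); no mass is ever created, and none is lost only when the tree is full and $\pi$ is proper.

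The base case is when $\nodeset'$ consists of the root alone: then $\leafset'=\{n_0\}$ and the claim reads $\pi(n_0)=1$, which holds by assumption, with equality trivially. For the inductive step, since $\nodeset'$ is finite, choose an internal node $n\in\nodeset'$ all of whose children in $\nodeset'$ are leaves (pick any deepest internal node). Let
\[
\nodeset'' = \nodeset' \setminus (\children(n)\cap \nodeset')\,,
\]
so $n$ becomes a leaf of $\nodeset''$ while the other leaves are $\leafset''=(\leafset'\setminus\children(n))\cup\{n\}$. Using the recursive definition $\pi(n')=\pi(n)\pi(n'\mid n)$ for $n'\in\children(n)$ together with the policy inequality $\sum_{n'\in\children(n)}\pi(n'\mid n)\le 1$, I would establish
\[
\sum_{n'\in\children(n)\cap\nodeset'} \pi(n') \;\le\; \pi(n)\,,
\]
from which
\[
\sum_{n'\in\leafset'}\pi(n') \;=\; \sum_{n'\in\leafset''}\pi(n') \;-\; \pi(n) \;+\; \sum_{n'\in\children(n)\cap\nodeset'}\pi(n') \;\le\; \sum_{n'\in\leafset''}\pi(n')\,,
\]
and the inductive hypothesis applied to the smaller tree $\nodeset''$ finishes the inequality.

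For the equality case, I would note that if $\nodeset'$ is full then $\children(n)\cap\nodeset'=\children(n)$, and properness of $\pi$ turns the policy inequality above into an equality, giving $\sum_{n'\in\children(n)}\pi(n')=\pi(n)$. It then remains to check that $\nodeset''$ is still full, which follows because the only node whose child-set was altered is $n$, which is now a leaf, while every other node in $\nodeset''$ keeps exactly the child-set it had in $\nodeset'$ (either all or none). Induction then yields $\sum_{n'\in\leafset''}\pi(n')=1$ and the chain of equalities gives the claim.

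The routine obstacle is the bookkeeping in the inductive step — ensuring one chooses a node $n$ whose children in $\nodeset'$ are all leaves (so that removing them does not orphan anything) and verifying that fullness is preserved. Neither step is deep, but both need to be stated carefully, particularly because $\nodeset'$ is allowed to be a strict sub-tree where some children of an internal node may be absent; the inequality must handle that case, while the equality case specifically rules it out via the fullness assumption.
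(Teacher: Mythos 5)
Your proof is correct, but it takes a genuinely different route from the paper. The paper proves the equality case first, non-inductively, via a telescoping identity: it writes $\sum_{n\in\leafset'}\pol(n) = \sum_{n\in\nodeset'}\pol(n) - \sum_{n\in\nodeset'\setminus\leafset'}\pol(n)$, expands each internal $\pol(n)$ as $\sum_{n'\in\children(n)}\pol(n')$ using properness and fullness, and collapses the difference down to $\pol(n_0)=1$ in one chain of equalities. It then derives the inequality case as a corollary, by renormalizing the policy on $\nodeset'$ to a proper policy $\tilde\pol$ with $\tilde\pol(n)\geq\pol(n)$ and invoking the equality case. You instead do induction on the number of internal nodes, pruning the children of a deepest internal node and showing the leaf-mass is monotone under pruning (with equality under properness and fullness). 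What each buys: the paper's argument is shorter and avoids the bookkeeping you rightly flag (choosing a prunable node, checking fullness is preserved); your induction is more elementary and handles both cases uniformly through the single subadditivity step $\sum_{n'\in\children(n)\cap\nodeset'}\pol(n')\leq\pol(n)$, and as a side benefit it quietly sidesteps an edge case in the paper's reduction --- the renormalization $\tilde\pol(n'\mid n) = \pol(n'\mid n)/\sum_{n''\in\children(n)\cap\nodeset'}\pol(n''\mid n)$ is ill-defined when the denominator vanishes at some internal node, a situation your pruning argument absorbs without comment (those children simply contribute zero). Both proofs are sound; yours trades a renormalization construction for induction bookkeeping.
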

\begin{proof}
We start with the equality case.
Using the fact that the policy is proper on the second line,
and the fact that the tree $\nodeset'$ is full on the fourth line,
we have
\begin{align*}
    \sum_{n\in\leafset'}\pol(n) &= 
    \sum_{n\in\nodeset'}\pol(n) - \sum_{n\in\nodeset'\setminus\leafset'} \pol(n) \\
    &=\sum_{n\in\nodeset'}\pol(n) - \sum_{n\in\nodeset'\setminus\leafset'} \pol(n)\sum_{n'\in\children(n)}\pol(n'\mid n) \\
    &=\sum_{n\in\nodeset'}\pol(n) - \sum_{n\in\nodeset'\setminus\leafset'} \sum_{n'\in\children(n)}\pol(n') \\
    &=\sum_{n\in\nodeset'}\pol(n) - \sum_{n\in\nodeset'\setminus\{n_0\}}\pol(n') \\
    &= \pol(n_0)=1\,.
\end{align*}
If the tree $\nodeset'$ is not full, it suffices to assign probability 0 to children outside of $\nodeset'$, which reduces to an improper policy.
If the policy is not proper, it can be made proper on $\nodeset'$ by renormalization of $\pol$ to $\tilde\pol$.
More precisely, if the tree $\nodeset'$ is not full or the policy is not proper,
define,
for all $n\in\nodeset'\setminus\leafset'$,
for all $n'\in\children(n)\cap \nodeset': \tilde\pol(n'\mid n) = \pol(n'\mid n) / \sum_{n''\in\children(n)\cap \nodeset'} \pol(n''\mid n)$,
which ensures that $\tilde\pol(n) \geq \pol(n)$ for all nodes $n\in\nodeset'$, 
and thus $\sum_{n\in\leafset'} \pol(n) \leq \sum_{n\in\leafset'} \tilde\pol(n) =1$.
\end{proof}

For a node $n^*$,
define $\nodesetcost(n^*) = \{n\in\nodeset: \rootop(n)=\rootop(n^*)\land
\dop(n) \leq \dop(n^*)\}$,
which is the set of nodes of the same tree of cost at most that of $n^*$,
and $\leafset'(n^*) = \{n\in\nodeset: \dop(n) > \dop(n^*) \land \parent(n) \in \nodesetcost(n^*)\}$, which is the set of children right outside $\nodesetcost(n^*)$
--- $\leafset'(n^*)$ would be the `frontier' or the contents of the priority queue in \cref{alg:ltscm}, disregarding tie breaking.
Let $A\geq 2$ be the maximal branching factor of the search tree $\nodesetcost(n^*)$,
that is, for all $n\in\nodeset: |\children(n)| \leq A$.
Observe that $\nodesetcost(n^*)$ may not be a full tree,
but that $\nodesetcost(n^*) \cup \leafset'(n^*)$ is a full tree.

\begin{theorem}[Lower bound]\label{thm:lower_bound}
Let $\pol$ be a proper policy.
Then, for a node $n^*$, 
the number of nodes with cost at most that of $n^*$ is at least
\begin{align*}
    |\nodesetcost(n^*)| \geq \frac{1}{(A-1)}\left(\frac{1}{\bar d}\dop(n^*)-1\right)\,.
\end{align*}
where $\bar d =\smash{1/\sum_{n\in\leafset'(n^*)}\frac{\pol(n)}{d(n)}}$
is the harmonic average of the depth at the leaves $\leafset'(n^*)$.
\end{theorem}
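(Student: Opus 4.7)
The plan is to combine a probability-mass counting argument over the leaves $\leafset'(n^*)$ with a standard tree combinatorial bound relating internal nodes to leaves under a branching-factor cap.

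First I would invoke that $\nodesetcost(n^*) \cup \leafset'(n^*)$ forms a full tree rooted at $\rootop(n^*)$: by definition every node in $\nodesetcost(n^*)$ has all of its children in $\nodesetcost(n^*)\cup\leafset'(n^*)$ (those with cost $\leq \dop(n^*)$ fall into $\nodesetcost$, the rest into $\leafset'$). Since $\pol$ is proper, \cref{lem:sum_to_one} applies and gives $\sum_{n \in \leafset'(n^*)} \pol(n) = 1$.

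Next, I would exploit the defining inequality of $\leafset'(n^*)$: every such leaf satisfies $\dop(n) > \dop(n^*)$, equivalently $\pol(n)/d(n) < 1/\dop(n^*)$ (and $d(n)\geq 1$ because the root is always in $\nodesetcost(n^*)$, so no division-by-zero). Summing over all $n \in \leafset'(n^*)$ and using the definition of $\bar d$,
\begin{equation*}
\frac{1}{\bar d} \;=\; \sum_{n\in\leafset'(n^*)} \frac{\pol(n)}{d(n)} \;<\; \frac{|\leafset'(n^*)|}{\dop(n^*)},
\end{equation*}
so $|\leafset'(n^*)| > \dop(n^*)/\bar d$. This is the main probabilistic estimate, and it is really just rearranging the bound together with the unit-mass identity from \cref{lem:sum_to_one}; there is no hard inequality to be clever about.

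The remaining step is a purely combinatorial fact about the full tree $T = \nodesetcost(n^*) \cup \leafset'(n^*)$: every internal node of $T$ lies in $\nodesetcost(n^*)$, every leaf of $T$ lies in $\leafset'(n^*)$, and every internal node has at most $A$ children. Counting edges two ways (each non-root contributes one parent-edge, and each internal node contributes at most $A$ child-edges) yields $|\leafset'(n^*)| - 1 \leq (A-1)\,|\nodesetcost(n^*)|$, i.e.\ $|\nodesetcost(n^*)| \geq (|\leafset'(n^*)|-1)/(A-1)$. Combining with the previous bound finishes the proof.

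The only real obstacle I anticipate is the bookkeeping around the definitions: verifying that $T$ is a full tree, that all leaves of $T$ are exactly $\leafset'(n^*)$ (no node of $\nodesetcost(n^*)$ can be a leaf of $T$, since its children — if any — are placed in one of the two sets by construction), and that the root case $d(n)=0$ cannot occur among the leaves in $\leafset'(n^*)$. Once those definitional details are in place, both the probability-mass inequality and the internal-versus-leaf count are one-liners.
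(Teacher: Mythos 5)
Your proposal is correct and follows essentially the same route as the paper's proof: the same observation that $\nodesetcost(n^*)\cup\leafset'(n^*)$ is a full tree with \cref{lem:sum_to_one} supplying the unit leaf mass, the same estimate $|\leafset'(n^*)| > \dop(n^*)/\bar d$ (you rearrange $\pol(n)/d(n) < 1/\dop(n^*)$ where the paper writes $|\leafset'(n^*)| = \sum_n \dop(n)\pol(n)/d(n) > \dop(n^*)/\bar d$, which is the identical algebra), and the same edge-counting bound $|\leafset'(n^*)|-1 \leq (A-1)|\nodesetcost(n^*)|$. The only cosmetic difference is that the paper's counting sums $|\children(n)|$ over $\nodesetcost(n^*)$ directly, which sidesteps your (harmless, but strictly unnecessary) claim that no node of $\nodesetcost(n^*)$ is a leaf of the full tree.
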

Also observe that by the harmonic-mean -- arithmetic mean inequality, $\bar d \leq \sum_{n\in\leafset'(n^*)}\pol(n)d(n)$, the average depth at the leaves of the search tree $\nodesetcost(n^*)$.
\begin{proof}
First, using the fact that $|\leafset'(n^*)\cup\nodesetcost(n^*)|$ is a full tree,
\begin{multline*}
    |\leafset'(n^*)|+|\nodesetcost(n^*)| = |\leafset'(n^*)\cup\nodesetcost(n^*)| \\
    = 1+ \sum_{n\in\nodesetcost(n^*)}\sum_{n'\in\children(n)} 1
    \leq 1+ A|\nodesetcost(n^*)|\,,
\end{multline*}
and by rearranging we obtain
\begin{align*}
    |\nodesetcost(n^*)| &\geq (|\leafset'(n^*)|-1)/(A-1)\,.
\end{align*}
Now, 
\begin{align*}
    |\leafset'(n^*)| &= \sum_{n\in\leafset'(n^*)} \dop(n)\frac{\pol(n)}{d(n)} \\
    &> \dop(n^*) \sum_{n\in\leafset'(n^*)}\frac{\pol(n)}{d(n)}
    = \frac{1}{\bar d}\dop(n^*) \,,
\end{align*}
Since $\leafset'(n^*)$ are the leaves of a full tree,
$\sum_{n\in\leafset'(n^*)} \pol(n) =1$ by \cref{lem:sum_to_one}
and thus $\bar d$ is indeed an harmonic mean of the depths of the leaves.
Therefore,
\begin{align*}
    |\nodesetcost(n^*)| \geq \frac{1}{(A-1)}\left(\frac{1}{\bar d}\dop(n^*)-1\right)\,.
    &\qedhere
\end{align*}
\end{proof}

\begin{remark}
It appears that the factor $1/(A-1)$ is necessary.
Consider the tree in \cref{fig:bad_lower_bound_tree}, and take $A \geq 3$.
Then $\dop(n_{2,1})=2A$ while
$\dop(n_{1,\cdot})=2A^2/(A-2) > \dop(n_{2, 1})$.
Then $|\nodesetcost(n_{2, 1})| = 5= 5\dop(n_{2, 1})/(2A) = O(\dop(n_{2,1})/(A-1)).$
Also note that replacing $2/A$ with $1/A$ in the tree
leads to $\dop(n_{2, 1})=4A$ and 
$\dop(n_{1,\cdot})=2A^2/(A-1) < \dop(n_{2, 1})$,
which means that $|\nodesetcost(n_{2, 1})|\geq 5+A=\Omega(\dop(n_{2, 1}))$ instead.
\end{remark}

\begin{figure}
\centering
\begin{tikzpicture}
\tikzstyle{level 1}=[sibling distance=40mm]
\tikzstyle{level 2}=[sibling distance=10mm]
\tikzstyle{level 3}=[sibling distance=10mm]
\node (root) {$n_0$}
    child {
        node (left1) {$n_{1}$}
        child {
            node (left2-1) {$n_{1,1}$}
            edge from parent node[left, draw=none] {$\frac{1}{A}$}
        }
        child {
            node (left2-2) {$n_{1,2}$}
            edge from parent node[left, draw=none] {$\frac{1}{A}$}
        }
        child [dotted] {
            node (left2-3) {\ldots}
        }
        child {
            node (left2-10) {$n_{1,A}$}
            edge from parent node[right, draw=none] {$\frac{1}{A}$}
        }
        edge from parent node[left] {$1-\frac2A$}
    }
    child {
        node (right1) {$n_{2}$}
        child {
            node (right2-1) {$n_{2,1}$}
            edge from parent node[left, draw=none] {$\frac12$}
        }
        child {
            node (right2-2) {$n_{2,2}$}
            edge from parent node[right, draw=none] {$\frac12$}
        }
        edge from parent node[right] {$\frac2A$}
    }
;
\end{tikzpicture}
\caption{A tree showing the necessity of the factor $A-1$ for the lower bound,
 with $A \geq 3$.
Nodes that have fewer than $A$ children can be completed with children of probability 0.}
\label{fig:bad_lower_bound_tree}
\end{figure}
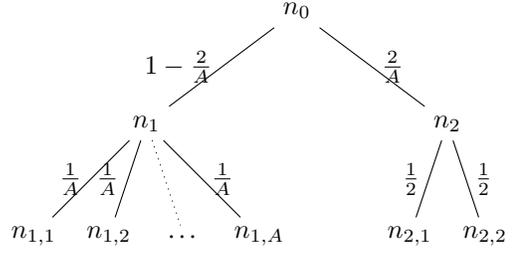

\section{Convex optimization algorithm}\label{apdx:convex_optim}

To minimize \cref{eq:ltsloss_cm}, many convex optimization algorithms can be considered.
For a first simple implementation, we would recommend using Frank-Wolfe
\cite{frank1956quadratic,jaggi2013duality} while being mindful of numerical stability 
(see \cref{apdx:num_stable}).

In the following, we describe the convex optimization routine we use for the experiments, however we suspect better and possibly more principled algorithms might be applicable too.

For the optimizer, we use isoGD~\cite{orseau2021isotuning} with projection onto $\betaset$, a scale-free variant of Adagrad~\cite{duchi2011adaptive} (see also SOLO-FTRL~\cite{orabona2018solo}), which is adapted to use a line search.
We have observed empirically that these algorithms tend to close the duality gap~\cite{jaggi2013duality} faster than other algorithms --- such as Frank-Wolfe~\cite{frank1956quadratic}, normalized gradient descent~\cite{cortes2006ngd}, accelerated gradient descent~\cite{nesterov1983agd}), but admittedly we did not try all variants of all algorithms.
However, the well-known difficulty with Adagrad-style algorithms is that the learning rate is always smaller than $1/G$ where $G$ is the magnitude of largest observed gradient.
But in function optimization, almost always the largest gradients are observed early and decrease significantly near the optimum --- in our case, the initial gradients can be exponentially large.
Hence, to reduce this dependency, we reset the learning rates on steps that are powers of 2 --- this is known as the doubling trick~\cite{cesabianchi2006prediction} and we fully expect that a regret bound can be proven with resets too, paying only a constant factor in the regret bound for a significantly milder dependency on the usually-larger initial gradients.
We use one learning rate per context.

Optimization is stopped after 200 iterations: if this happens to not be enough to improve the policy significantly to solve more problems, 200 more iterations will be triggered anyway after the next Bootstrap iteration.

Optimization is also stopped early if the duality gap~\cite{jaggi2013duality} 
guarantees that the loss is within a factor 2 of the optimum. Recall that
this roughly means that the bound on the search time is a factor 2 away
from the bound on the search time for the \emph{optimal} parameters.
The duality gap is calculated every 20 iterations to amortize the computation cost.
See also \cref{apdx:betasimplex}.

For the line search, we use some ideas from \citet{truong2021backtracking}:
A line search is triggered at each update iteration $t$ where $1\leq t\mod 20\leq 3$, and the learning rate found by the line search is re-used for the next optimization steps as long as there is improvement --- otherwise a line search is triggered too ---  and also as the first middle query of the line search in [0, 1].
We use a (quasi-)exact line search rather than a backtracking line search,
as it can make a significant difference on the first iterations, but less so afterwards.

See also \cref{apdx:num_stable} on numerical stability.

\paragraph{Training times.}
See the main text \cref{sec:Exp} for the description of the hardware.
The total training time for The Witness is 25min, for Boxoban it is 1h02, for STP it is 1h03, and for Boxoban hard it is 11h15.
See \cref{apdx:results} for Rubik's Cube.

\section{Numerical Stability Considerations}\label{apdx:num_stable}

\newcommand{\lse}{\operatorname{LSE}}

One should use a numerically stable `softmax' function for product mixing,
and a stable `log-sum-exp' (LSE) to calculate the logarithm of the LTS loss --- the LTS loss
can be exponentially large for untuned parameters.
\begin{align*}
    \lse(X) &= C + \log \sum_{x\in X} \exp(x - C)\,,&  C&= \max X\,.
\end{align*}
For a set $\nodeset'$ of nodes,
We can rewrite \cref{eq:ltsloss_cm,eq:ltsinstantloss_cm}, and the scaled gradient  as:
\begin{align*}
    &\log \loss(n, \beta) = 
    \log d(n) 
    - \sum_{j=0}^{d(n)-1} \log \prodmix(n_{[j]}, a(n_{[j+1]}), \beta)\,,\\
    &\log \Loss(\nodeset',\beta) = 
    \lse(\{\log\loss(n,\beta) \mid n\in\nodeset'\})
    \\
    &\log(\Loss(\nodeset',\beta) + R(\beta))
    = \lse(\log \Loss(\nodeset',\beta), \log R(\beta))\,.
\end{align*}
Recall that $\epsmix=0$ during optimization.
During the line search, one can use
$\log(\Loss(\nodeset',\beta) + R(\beta))$ 
but note that, while still unimodal (quasiconvex), it may not be convex anymore  with a quadratic regularizer (despite \cref{thm:logL_convex} below).

To calculate the gradients, similar caution should be used, for example
for some constant $C$:
\begin{align*}
&\beta^{t+1} = \argmin_{\beta}  e^{-C} R(\beta) + \sum_{\tau\in\mathcal{T}} \exp(\log \loss(\tau, \beta) - C)\,, \\
    &\nabla\exp(\log \loss(\tau, \beta) - C)
    = \exp(\log \loss(\tau, \beta) - C)\nabla \log \loss(\tau, \beta)\,. \\
\end{align*}

\section{Convexity}

\subsection{Log loss to LTS loss}\label{apdx:logloss_to_ltsloss}

\begin{proof}[Proof of \cref{thm:logloss_to_inverseloss}]
We can write
\begin{align*}
    L(x) = \sum_k \exp\left(\sum_t-\log f_{k,t}(x)\right)\,,
\end{align*}
By assumption, $-\log f_{k,t}(x)$ is convex for all $k,t$,
and convexity is preserved by both summation and exponentiation
(convex and non-decreasing)~\cite{boyd2004convex},
hence $L(x)$ is convex.
\end{proof}

\begin{remark}
Convexity in LTS loss does not imply convexity in log loss.
For example, take $f(x) = 1/x^2$ for $x>0$, then $1/f$ is (strongly) convex,
but $-\log 1/x^2 = 2\log |x|$ is not only concave on $(-\infty,0)$ and on $(0,\infty)$ but also has a singularity at 0.
In a sense, the LTS loss is `nicer' for convex optimization than the log loss.
\end{remark}

\begin{theorem}\label{thm:logL_convex}
The logarithm of loss function $L(\cdot)$ defined in \cref{thm:logloss_to_inverseloss}
is convex.
\end{theorem}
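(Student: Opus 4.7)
The plan is to recognize $\log L(\beta)$ as a composition of log-sum-exp with convex inner functions and then invoke the standard composition rule for convexity.

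First I would rewrite $L$ in exponential form, exactly as in the proof of \cref{thm:logloss_to_inverseloss}: set $g_k(\beta) = \sum_t -\log f_{k,t}(\beta)$, so that
\begin{align*}
    L(\beta) = \sum_k \exp(g_k(\beta)),
    \qquad
    \log L(\beta) = \log\sum_k \exp(g_k(\beta)).
\end{align*}
By hypothesis each $-\log f_{k,t}$ is convex, and since sums of convex functions are convex, every $g_k$ is convex in $\beta$.

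Next I would invoke the well-known fact that the log-sum-exp map $\operatorname{LSE}(y_1,\dots,y_K) = \log\sum_k \exp(y_k)$ is convex on $\mathbb{R}^K$ and non-decreasing in each coordinate (a standard result, e.g.\ \cite{boyd2004convex}, whose convexity proof proceeds via the Cauchy--Schwarz / Hölder inequality applied to $\nabla^2 \operatorname{LSE}$). The vector-valued composition rule then says: if $h \colon \mathbb{R}^K \to \mathbb{R}$ is convex and non-decreasing in each argument, and $g_1,\dots,g_K$ are convex, then $\beta \mapsto h(g_1(\beta),\dots,g_K(\beta))$ is convex. Applying this with $h = \operatorname{LSE}$ and the $g_k$ above yields convexity of $\log L(\beta)$.

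The only delicate point is the monotonicity of $\operatorname{LSE}$, which is essential because without it we could not chain an outer convex function with inner convex (rather than affine) functions. Since $\partial\operatorname{LSE}/\partial y_k = e^{y_k}/\sum_j e^{y_j} > 0$, monotonicity is immediate, so there is no real obstacle; the statement follows directly. If the paper prefers a self-contained argument avoiding the composition rule, one can instead verify convexity by a one-line Hölder computation: for $\lambda\in[0,1]$,
\begin{align*}
    \sum_k e^{\lambda g_k(\beta_1) + (1-\lambda)g_k(\beta_2)}
    \le \sum_k e^{\lambda g_k(\beta_1)+(1-\lambda)g_k(\beta_2)}
\end{align*}
where the inner exponents are bounded above by $\lambda g_k(\beta_1)+(1-\lambda)g_k(\beta_2)$ using convexity of each $g_k$, and then Hölder's inequality on the sum gives
\begin{align*}
    \sum_k e^{\lambda g_k(\beta_1)+(1-\lambda)g_k(\beta_2)}
    \le \Bigl(\sum_k e^{g_k(\beta_1)}\Bigr)^{\lambda}\Bigl(\sum_k e^{g_k(\beta_2)}\Bigr)^{1-\lambda},
\end{align*}
and taking logarithms yields $\log L(\lambda\beta_1+(1-\lambda)\beta_2) \le \lambda\log L(\beta_1)+(1-\lambda)\log L(\beta_2)$, completing the proof.
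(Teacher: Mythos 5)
Your proof is correct and takes essentially the same route as the paper's: write $\log L(\beta)=\log\sum_k\exp\bigl(\sum_t-\log f_{k,t}(\beta)\bigr)$ and observe that summation and log-sum-exp preserve convexity (the paper cites \cite{boyd2004convex} for exactly this), with your explicit check that $\operatorname{LSE}$ is componentwise nondecreasing merely spelling out what the paper's composition step leaves implicit. The supplementary H\"older argument is also sound, apart from a typo in your first display, whose left-hand side should read $\sum_k e^{g_k(\lambda\beta_1+(1-\lambda)\beta_2)}$ rather than duplicating the right-hand side.
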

\begin{proof}
Following the proof of \cref{thm:logloss_to_inverseloss} we can write
\begin{align*}
    \log L(x) = \log \sum_k \exp\left(\sum_t-\log f_{k,t}(x)\right)\,,
\end{align*}
and the result follows by observing that log-sum-exp and summation preserve convexity~\cite{boyd2004convex}.
\end{proof}

\subsection{LTS loss convexity of product mixing of context predictors}\label{sec:LTSconv}

\begin{theorem}
The function $\Loss(\nodeset_t, \beta)$ defined in \cref{eq:ltsloss_cm} is convex in $\beta$.
\end{theorem}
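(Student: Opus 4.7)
The plan is to reduce the claim to \cref{thm:logloss_to_inverseloss} (log-convexity implies inverse-loss convexity), so all that really needs to be done is to check that, for every parent–child transition along every solution trajectory, the function $\beta \mapsto -\log \pi(n'\mid n;\beta)$ is convex in $\beta$. Since during optimization $\epsmix = 0$, this transition probability equals $\prodmix(n,a(n');\beta)$, so I only need the log-concavity of the product-mixing probability given in \eqref{eq:prodmix_softmax}.

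First I would write out
\begin{align*}
-\log \prodmix(n,a;\beta)
&= -\!\!\!\sum_{c\in\contextset(n)}\!\!\!\beta_{c,a}
\;+\; \log\!\!\!\sum_{a'\in\actionset(n)}\!\!\!\exp\!\Bigl(\sum_{c\in\contextset(n)}\beta_{c,a'}\Bigr).
\end{align*}
The first summand is linear in $\beta$, hence convex. The second summand is a log-sum-exp applied to a finite family of linear functions of $\beta$; log-sum-exp is convex and composition of a convex non-decreasing function with affine maps preserves convexity, so this term is convex too. Adding the two yields the convexity of $\beta \mapsto -\log\prodmix(n,a;\beta)$ for every node $n$ and every action $a\in\actionset(n)$.

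Now I would enumerate, for each solution node $n^k\in\nodeset'$, the factorization $\pi(n^k;\beta)=\prod_{t=1}^{d(n^k)}\pi(n^k_{[t]}\mid n^k_{[t-1]};\beta)$, and apply \cref{thm:logloss_to_inverseloss} with $f_{k,0}(\beta)=1/d(n^k)$ (a strictly positive constant, so $-\log f_{k,0}$ is the convex constant function) and $f_{k,t}(\beta)=\pi(n^k_{[t]}\mid n^k_{[t-1]};\beta)=\prodmix(n^k_{[t-1]},a(n^k_{[t]});\beta)$ for $t\in[d(n^k)]$. Each $f_{k,t}$ takes values in $(0,\infty)$ (positivity is guaranteed by the $\epslow$ floor on $\beta_{c,a}$), and the previous paragraph shows $-\log f_{k,t}$ is convex, so the hypotheses of \cref{thm:logloss_to_inverseloss} are met. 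The theorem then yields the convexity of
\begin{equation*}
\sum_{k}\frac{1}{\prod_{t}f_{k,t}(\beta)}
=\sum_{n\in\nodeset'}\frac{d(n)}{\pi(n;\beta)}
=\Loss(\nodeset',\beta).
\end{equation*}

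I expect no real obstacle beyond bookkeeping: the only non-trivial step is recognizing softmax as a log-concave density, and that is a one-line log-sum-exp calculation. The $\epsmix$-smoothed mixture used at search time is \emph{not} log-concave in general (it is an affine combination of a log-concave function and a constant), which is why the theorem is stated, and proved, for the optimization-time policy with $\epsmix=0$; this is consistent with how the loss is defined in \eqref{eq:ltsloss_cm} and with the remark preceding it. If one wanted to also include a convex regularizer $R(\beta)$ as in \eqref{eq:beta_full_update}, convexity of the regularized objective follows immediately by summing.
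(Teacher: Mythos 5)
Your proof is correct, but it takes a different route from the one the paper gives for this particular statement. You reduce everything to \cref{thm:logloss_to_inverseloss}: you first verify that $\beta\mapsto-\log\prodmix(n,a;\beta)$ is convex (a linear term plus a log-sum-exp of linear functions of $\beta$), then factor $\pol(n^k;\beta)$ along each trajectory and feed $f_{k,0}=1/d(n^k)$, $f_{k,t}=\pol(n^k_{[t]}\mid n^k_{[t-1]};\beta)$ into that theorem. This is essentially the reduction the main text sketches right after \cref{thm:logloss_to_inverseloss}, and it matches the paper's remark that context models, like geometric mixing, have a convex log loss and hence a convex LTS loss. The paper's own proof of this appendix theorem is the advertised ``more direct'' one: it observes that $\Loss(\nodeset_t,\beta)$ is literally of the form $\sum\exp\sum\log\sum\exp f(\beta)$ with $f$ linear, and invokes the facts that summation, exponentiation (convex and non-decreasing), and log-sum-exp all preserve convexity. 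Unrolled, the two arguments perform the same composition-rule computation, so neither is deeper; the direct proof is shorter, while your modular version buys generality --- any log-concave parameterization of the predictors (e.g., the canonical exponential family, exactly as exploited in the lemma that follows in the appendix) immediately yields a convex LTS loss without redoing the calculation. Your bookkeeping is sound: taking $\epsmix=0$ matches the definition in \cref{eq:ltsinstantloss_cm}; positivity of the transition probabilities holds (strict positivity of the softmax needs no floor --- the $\epslow$ constraint merely gives a uniform lower bound); and your side remark that the $\epsmix$-smoothed policy need not be log-concave is a fair justification for restricting the convexity claim to the optimization-time policy.
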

\begin{proof}
This function is of the form $\sum \exp \sum \log \sum \exp f(\beta)$
where $f$ is linear.
The result follows by observing that summation, exponentiation, and log-sum-exp are all preserving convexity, since they all are convex and non-decreasing~\cite{boyd2004convex}.
\end{proof}

We now provide a more general result that applies if the context predictors
are members of the exponential family, rather than just categorical distributions.

\begin{lemma}
Let $\actionset$ be a finite set of actions and canonical parameters $\beta\in\Reals^{\contextset\times\actionset}$, and let $\contextset$ be a set of predictors.
Let 
\begin{align*}
    p_c(n, a; \beta) = \exp[\beta \cdot T_c(n, a) - A_c(n,\beta) + B_c(n, a)]
\end{align*} 
and all node $n\in\nodeset$ and all actions  $a\in \actionset$,
with $A_c:\nodeset\times\betaset\to\Reals$ and $B_c:\nodeset\times\actionset\to\Reals$ 
and $T_c:\nodeset\times\actionset\to\Reals^{\contextset\times\actionset}$
for all predictors $c\in\contextset$,
be a set of members of the exponential family in canonical form,
with a dependency on the current node $n\in\nodeset$.
Then the product mixing $\prodmix(n, a;\beta)$ (\cref{eq:prodmix})
of the $\{p_c\}_{c\in\contextset}$ is also a member of the exponential family in canonical form:
\begin{align*}
    \prodmix(n, a;\beta) &=\exp[\beta\cdot T(n, a) - A(n, \beta) + B(n, a)]\,,\\
\intertext{with
}
    T(n, a) &= \sum_{c\in\contextset} T_c(n,a)\,, \\
    B(n, a) &= \sum_{c\in\contextset} B_c(n,a)\,, \\
    A(n, \beta) &= \ln \sum_{a'\in \actionset}
    \exp\left[\beta\cdot T(n,a') + B(n,a') \right]\,. 
\end{align*}
\end{lemma}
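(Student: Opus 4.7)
The plan is to prove this by a direct algebraic manipulation, substituting the exponential family form of each $p_c$ into the definition of product mixing from \cref{eq:prodmix} and showing that the individual log-partition terms $A_c(n,\beta)$ cancel between numerator and denominator.

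First I would expand the numerator of $\prodmix(n,a;\beta)$ by taking the product over $c\in\contextset$:
\begin{align*}
\prod_{c\in\contextset} p_c(n,a;\beta)
&= \prod_{c\in\contextset} \exp[\beta\cdot T_c(n,a) - A_c(n,\beta) + B_c(n,a)] \\
&= \exp\!\Big[\beta\cdot T(n,a) + B(n,a) - \textstyle\sum_{c} A_c(n,\beta)\Big],
\end{align*}
using the definitions of $T$ and $B$ given in the statement. Next I would observe that the factor $\exp[-\sum_c A_c(n,\beta)]$ depends on $n$ and $\beta$ but not on $a$, so it factors out of the denominator in \cref{eq:prodmix} and cancels with the same factor in the numerator. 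What remains is
\begin{align*}
\prodmix(n,a;\beta)
= \frac{\exp[\beta\cdot T(n,a) + B(n,a)]}{\sum_{a'\in\actionset}\exp[\beta\cdot T(n,a') + B(n,a')]}.
\end{align*}

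Finally I would identify the denominator with $\exp A(n,\beta)$ using the claimed formula $A(n,\beta) = \ln\sum_{a'}\exp[\beta\cdot T(n,a') + B(n,a')]$, which yields
\begin{align*}
\prodmix(n,a;\beta) = \exp[\beta\cdot T(n,a) - A(n,\beta) + B(n,a)],
\end{align*}
matching the canonical exponential family form with sufficient statistic $T$, base measure $B$, and log-partition $A$.

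There is no real obstacle here; the entire argument is a one-line cancellation followed by a relabeling. The only thing to be careful about is that the cancellation relies on $\actionset$ being finite (so that the sum in the denominator is well-defined and the resulting $A(n,\beta)$ is finite for all $\beta$), which is exactly the hypothesis of the lemma, and that $T_c$, $A_c$, $B_c$ do not themselves depend on $a$ in the same way — specifically, $A_c(n,\beta)$ must not depend on $a$, which is precisely what ensures the cancellation goes through.
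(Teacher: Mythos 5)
Your proof is correct and follows essentially the same route as the paper's: expand the product of the $p_c$, note that the common factor $\exp[-\sum_c A_c(n,\beta)]$ cancels between numerator and denominator of \cref{eq:prodmix}, and identify the remaining normalizer with $\exp A(n,\beta)$. Your closing observation that the $A_c(n,\beta)$ drop out entirely is also made explicitly in the paper, which remarks that they never need to be computed.
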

\begin{proof}
The result is a straightforward application of the definition of product mixing:
\begin{align*}
    \prodmix&(n, a;\beta) \\
    &= \frac{\prod_{c\in\contextset}p_c(n,a;\beta)}{\sum_{a'\in \actionset}\prod_{c\in\contextset}p_c(n,a';\beta)}\\
    &= \frac{\exp\left[\beta\cdot \sum_{c\in\contextset}T_c(n,a)  + \sum_{c\in\contextset}B_c(n,a) \right]}{
    \sum_{a'\in \actionset}
    \exp\left[\beta\cdot \sum_{c\in\contextset}T_c(n,a') + \sum_{c\in\contextset}B_c(n,a') \right]} \\
    &= 
    \exp[\beta\cdot T(n,a) - A(n, \beta) + B(n,a)]\,.
    \qedhere
\end{align*}
\end{proof}
Categorical context models can be expressed as members of the exponential family in canonical form, by setting $T_c(n,a)$ to a zero vector, with just a single 1
at index $(c, a)$ for context $c$ and action $a$, but only if the context $c$ is active at node $n$, \ie $c\in\contextset(n)$, that is
\begin{align*}
    T_c(n,a)_{c',a'} ~=~ \indicator{c'=c}\cdot\indicator{a'=a}\cdot\indicator{c\in\contextset(n)},
\end{align*}
where $\indicator{test}=1$ if $test$ is true, 0 otherwise.
This implies that the vector $T(n,a)$ also has a 1 at index $(c, a)$ for each active context $c$, for each action $a\in\actionset$.
To select only the valid actions at node $n$,
also set $B_c(n, a)=0$ if $a\in\actionset(n)$ and $B_c(n,a)=-\infty$ otherwise.
Then
\begin{align*}
    p_c(n, a ; \beta) &= \exp(\beta \cdot T_c(n, a)  - A_c(n, \beta) + B_c(n, a))\\
    &= \exp(\beta_{c,a}\indicator{c\in\contextset(n)} - A_c(n,\beta))\indicator{a\in\actionset(n)}\,, \\
    A_c(n, \beta) &= \ln \sum_{a\in\actionset(n)} \exp(\beta_{c, a}\indicator{c\in\contextset(n)})\,,\\
    \intertext{that is,}
    p_c(n, a ; \beta) &=\begin{cases}
    \frac{\exp \beta_{c, a}}{\sum_{a'\in\actionset(n)}\exp \beta_{c,a'}} &\text{if } c\in\contextset(n), a\in\actionset(n)\,, \\
    0 &\text{if } a\notin\actionset(n)\,,\\
    \frac{1}{|\actionset(n)|} &\text{otherwise.}
    \end{cases}
\end{align*}
Recall that uniform distributions have no effects in the product mixing,
and thus a context that is not active is in effect removed from the product mixing,
as in \cref{eq:prodmix_softmax}.

It is interesting to note that the $A_c(n, \beta)$ do not appear directly
in the resulting form of the product mixing and thus do not need to be calculated.

Beyond simple context models, since the vector $T_c(n, a)$ can depend on the current node $n$,
it can make use in particular of \emph{features} of the corresponding state of the environment, such as a heuristic distance to the goal.

Finally, members of the exponential family in canonical form are well-known to be 
log-concave in their natural parameters $\beta$,
that is, 
their log loss is convex in their natural parameters:
$-\log p_c(\cdot,\cdot;\beta)$ is of the form 
$h(\beta) + \ln \sum \exp g(\beta)$ where $h$ and $g$ are linear,
and since log-sum-exp is convex the result follows.
Therefore, by \cref{thm:logloss_to_inverseloss},
the LTS loss of the product mixing of members of the canonical exponential family 
is convex in their natural parameters.

\section{Beta-Simplex}\label{apdx:betasimplex}

This section describes a small but computationally helpful improvement regarding the calculation of the duality gap~\cite{jaggi2013duality}, which is used to terminate the optimization procedure.

The domain of the parameters $\beta$ is defined in the main paper as 
$\betaset=[\ln\epslow, 0]^{|\contextset|\times A}$,
and wrote that $p_c(a;\beta) \geq \epslow/|\children(n)|$.

While the duality gap can be calculated on this set,
it can also be calculated for a subset of $\betaset$,
which more closely relates to the probability distributions of the predictors.
Furthermore, the regret can still be meaningfully compared to 
the best probability distributions for the context predictors,
rather than the optimal parameters $\beta^*_t\in\betaset$.

The highest-entropy probability distribution is the uniform distribution
and can be  expressed with $p_c$ by setting all components $\beta_{c, \cdot}$ to the same value.

The lowest-entropy probability distribution that can be expressed with $p_c$ is
such that $\beta_{c, a}= 0$ for some chosen $a\in\actionset$ and $\beta_{c, a'}=\ln\epslow$ for $a'\neq a$,
giving 
\begin{align*}
    p_c(a; \beta) &= 1/(1+(A-1)\epslow)\geq 1-(A-1)\epslow\,,\\
    p_c(a';\beta) &= \epslow/(1+(A-1)\epslow)\leq \epslow\,.
\end{align*}

Consider the constrained simplex $\Delta_{\epslow}$ such that if $p\in\Delta_{\epslow}$,
then for all $a\in\actionset:p(a) \geq \epslow$.
Hence the probability distributions expressed by $\betaset$ 
can express at least as much as $\Delta_{\epslow}$ by convex combinations.
Unfortunately, enforcing $\sum_{a} \exp \beta_{\cdot,a} =1$ on $\betaset$ does not 
lead to a convex set.

Instead, we define the $\beta$-simplex as a (convex) subset of $\betaset$
by constraining $\sum_{a\in\actionset}\beta_{\cdot, a} = (A-1)\ln\epslow$.
Note that the $\beta$-simplex still contains the highest and lowest entropy distributions of $\Delta_{\epslow}$.
The `center' of the $\beta$-simplex is at $\beta_{.,a}=((A-1)\ln\epslow) / A$,
which we define as $\beta_0$,
and explains why we use the regularization $\|\beta-\beta_0\|^2$.

Hence, instead of calculating the regret compared to $\beta^* \in\betaset$,
we can also consider calculating the regret to the best distribution in $\Delta_{\epslow}$
or the best point in the $\beta$-simplex.

More importantly, we calculate the duality gap~\cite{jaggi2013duality} for the $\beta$-simplex,
which experimentally is easier to reduce
than the duality gap for the $\beta$-hypercube $\betaset$.

\section{Mutex Sets: Relative Tilings}\label{apdx:relative_tilings}

We now give a general and formal definition of the rectangular tiling example in Figure \Cref{fig:tiling}.
It is closely related to tile coding~\cite{sutton1998reinforcement}.
On a grid of dimension $R\times C$,
relative to some position $(r_0, c_0)$ on the grid,
we call a \emph{relative tile} $T(s_r, s_c, d_r, d_c)$ a particular \mutexset{} with
row span $s_r$, column span $s_c$,
row offset $d_r$ and column offset $d_c$.
The ordered values of the grid rectangle between $(r_0+d_r, c_0+d_c)$ and $(r_0+d_r+s_r-1, c_0+d_c+s_c-1)$ 
identify a unique context within the relative tile.
A padding value 
can be chosen arbitrarily for coordinates that are outside the grid.

A \emph{relative tiling} $R_T(s_r, s_c, D_r, D_c)$
of row span $s_r$, column span $s_c$, row distance $D_r$ and column distance $D_c$, relative to some position $(r_0, c_0)$ is a set of relative tile \mutexsets{}
$\{T(s_r, s_c, d_r, d_c)\}_{d_r, d_c}$ for $d_r\in[-D_r,\dots, D_r-s_r+1]$ and $d_c\in[-D_c,\dots,D_c-s_c+1]$ --- this ensures that the last position of the last relative tile is at 
$(r_0+D_r, c_0+D_c)$, while the first position of the first relative tile is at $(r_0-D_r,c_0-D_c)$.
There are $(2D_r+2-s_r)(2D_c+2-s_c)$ \mutexsets{} in the relative tiling.

In particular the mutex sets used in the experiments for Sokoban, The Witness, and the Sliding-Tile Puzzle are as follows:
The position $(r_0, c_0)$ is taken to be the position of the agent: the avatar in Sokoban, the blank tile in the sliding tile puzzle, and the tip of the `snake' in The Witness.

We used the following relative tilings.
For Sokoban: $R_T(3,3,4,4)$, 
$R_T(2,4,2,3)$, $R_T(4,2,3,2)$
$R_T(2,2,2,2)$, 
$R_T(1,2,1,1)$, $R_T(2,1,1,1)$,
the number of mutex sets is $|\patternset|=125$,
and walls are used as padding value;
For The Witness: $R_T(3,3,4,4)$, $R_T(2,2,4,4)$, $R_T(2,1,1,1)$, $R_T(1,2,1,1)$,
$|\patternset|=125$,
with one additional padding color, and the goal location is not encoded.
For the STP: $R_T(2,2,3,3)$, $R_T(2,1,2,2)$, $R_T(1,2,2,2)$, $R_T(1,1,2,2)$,
$|\patternset|=102$, with an additional padding value.

For The Witness, our implementation uses two grids: one for the (fixed) colors
and one for the snake (the trajectory of the player).
We perform the same relativing tiling on each grid in parallel,
merging each pair of contexts into a single context.

\section{Extended Table of Results}\label{apdx:results}

While the message of the main paper is to compare LTS+CM with LTS+NN,
it is also interesting to compare LTS+CM with other algorithms
for the same domains.
See \cref{tab:fullresults}.

In particular, \citet{orseau2021policy} introduce the PHS* algorithm,
which is based on LTS, but also uses a heuristic function to speed up the search,
and they also compare with Weighted A* (WA*)~\cite{pohl1970heuristic,ebendt2009weighted} which uses only a heuristic function, both with similar neural networks as LTS+NN.
We can see LTS+CM is competitive on all three domains, while being fast (recall that tests use only one CPU, and no GPU).
Moreover, LTS+CM could also be extended with a value function,
either to PHS*+CM, or by using the value function as input features to each context,
or by binarizing the values into multiple (possibly overlapping) contexts.

For the STP, while the test set used for DeepCubeA is different from the one used 
by the other algorithms,
\footnote{The training set is also different, and although the number of problems is not clearly specified, it appears to be much larger than 50k problems.}
we still expect the results to be comparable since \citet{orseau2021policy}'s test sets are composed of random instances rather than scrambled from the solution.
The heavy cost in expansions paid by DeepCubeA shows the price of finding near-optimal solutions --- which it is reported to find 97\% of the time.
WA* appears to give the best compromise on this problem.
LTS+CM expands more nodes but, given that it is still fast in milliseconds,
we can hope for better results possibly by adding more mutex sets.

DeepCubeA has also been trained on the 900k unfiltered Boxoban set~\cite{boxobanlevels},
and finds short solutions in few expansions.
But because it is trained differently from the other algorithms,
its results are not directly comparable.
We trained LTS+CM on the 450k medium Boxoban set and obtained a slightly better average number of expansions, with a much faster algorithm in milliseconds --- at the expense of solution length.
The same LTS+CM also expands almost 4x fewer nodes on the Boxoban hard set
than the one trained with only 50k problems.

On the Rubik's cube, we report results for LTS+CM at various stages of its training.
After just 300k cubes (1 hour 50 minutes),
scrambled at most 15 times (these are much easier than fully scrambled cubes)
it already finds a policy that solves the whole random test set --- scrambled 100 times, which is usually considered more than sufficient for generating random cubes~\cite{korf1997cube}.
At 400k cubes, the policy is already substantially faster than previous work.
We trained the network for up to 9M cubes, but the average number of expansions stabilizes at around 320.
The training curve is shown in \cref{fig:cube}.
\begin{figure*}[htbp!]
    \centering
    \includegraphics[width=\textwidth]{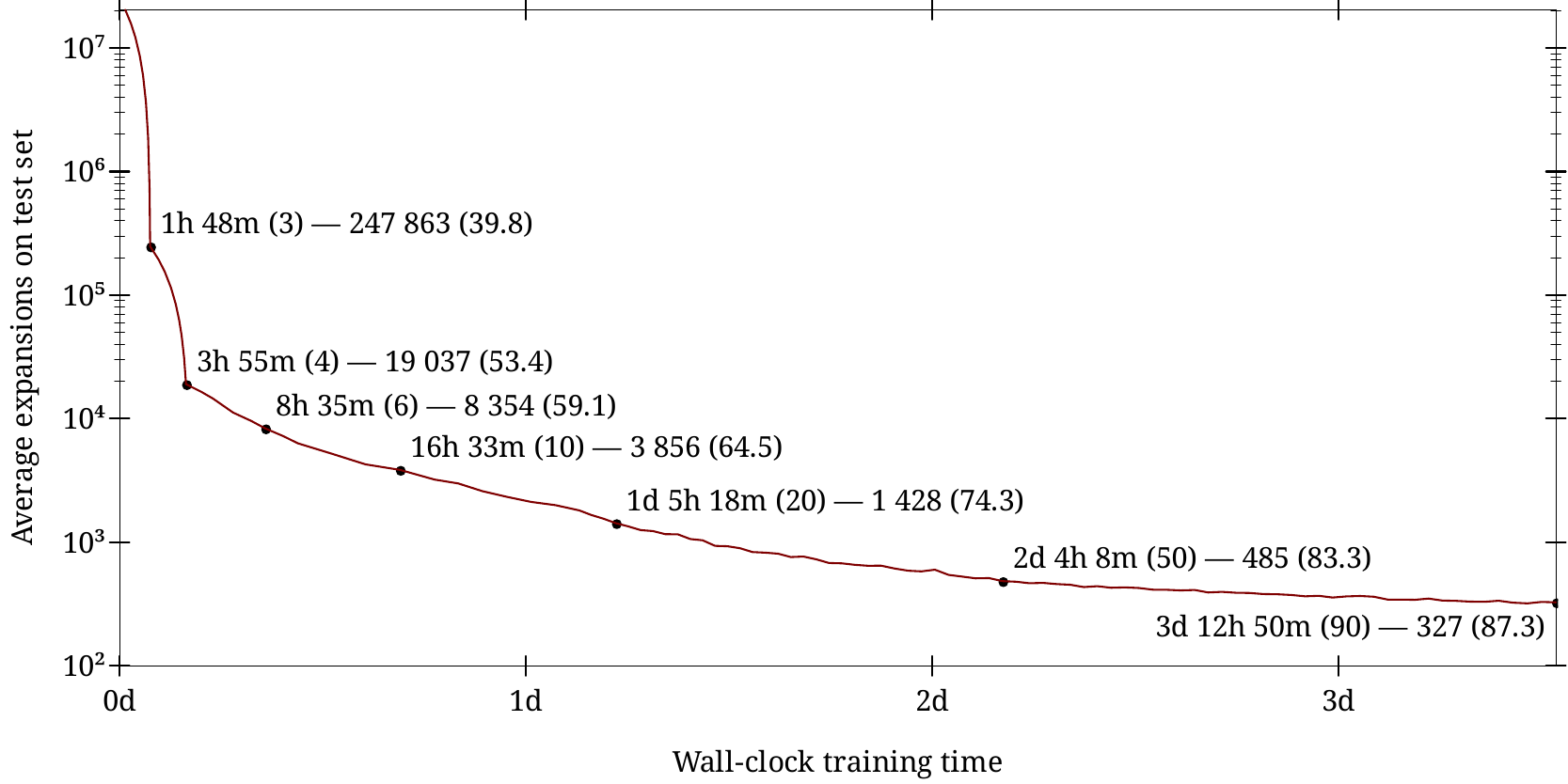}
    \caption{Rubik's cube: Average number of expansions on the test set as a function of training time.
    Each point label should be read: ``training time (iteration) --- expansions (length)'',
    where ``training time'' is the wall-clock time used for both solving and optimizing since the start,
    ``iteration'' is the training set iteration (each set contains 100k cubes),
    ``expansions'' is the average number of expansions on the test set,
    and ``length'' is the average length of the solutions found on the test set.}
    \label{fig:cube}
\end{figure*}

We also compared LTS+CM on the same 100 hardest Rubik's Cube problems of \citet{buchner2022a}
as used by \cite{allen2021focused_macros}.
This set appears slightly simpler than the test set we used as the algorithms are able to find shorter solutions with fewer expansions in this set than in our set of problems. 
Note also that \citet{allen2021focused_macros} do not account for the length of the macro actions in the number of expansions, because they use a  logical representation of the problem to `compress' the results of macro actions --- this assumes access to more information about the environment than just a simulator.
The two approaches, focused macro-actions and learning a policy,
are very much composable, and it would be interesting 
to see whether such macro actions could help make LTS+CM more efficient in training time or converge to a faster policy.
As far as we are aware, LTS+CM is the first machine-learning algorithm to learn a fast policy for the Rubik's cube --- while using only little domain-specific knowledge.

Finally, it must be noted that the timings reported in the table should be read with a grain of salt, as different algorithms have been tested on different machines
and implemented using different libraries. 
LTS+CM has been implemented in Racket~\footnote{https://racket-lang.org/} and we report running times as a means of showing that the CM models are very fast in practice.

\fullresultstrue
\resultstable

\clearpage
\section{Table of Notation}\label{apdx:table_notation}
\begin{tabular}{ll}
$\nodeset$          & Set of all nodes, may contain several root nodes \\
$n$                 & A node in $\nodeset$\\
$d(n)$              & Depth of the node $n$ \\
$\children(n)$      & Children of $n$ \\
$\parent(n)$        & Single parent of $n$, may not exist \\
$\rootop(n)$        & Topmost ancestor of $n$, has no parent \\
$\anc(n)$           & Set of ancestors of $n$ \\
$\ancn(n)$          & $\anc(n)\cup\{n\}$ \\
$\desc(n)$          & Descendants of $n$ \\
$\descn(n)$         & $\desc(n)\cup\{n\}$ \\
$n_{[j]}$           & Node at depth $j$ on the path from  \\
                    & $\rootop(n)=n_{[0]}$ to $n=n_{[d(n)]}$\\
$\nodesetcost(n)$       & Set of nodes of cost $d(\cdot)/\pol(\cdot)$ at most that of $n$\\
$\nodeset_t$        & Set of nodes after $t$ problems \\ 
$\nodeset^*$        & Set of solution nodes $n^*$ \\
$\nodeset^*(n)$     & $=\nodeset^*\cap\descn(n)$ et of solution nodes below $n$ \\ 
$\nodeset^0$        & Set of root nodes (problems) \\
$\leafset(n)$       & Leaves of the tree $\mathcal{N}(n)$ \\
$\pol$              & Policy \\
$\pol(n)$           & Probability of the node $n$ according to \\
                    & the policy $\pol$ \\
$\pol(n' \mid n)$   & $\pol(n') / \pol(n)$, assuming $n'\in\children(n)$. \\
$\dop(n)$           & $d(n)/\pol(n)$ \\
$\loss(n,\beta)$    & Loss function for a single node $=\dop(n)$ for \\
                    & parameters $\beta$ \\
$\Loss(\nodeset', \beta)$ & Cumulative loss over a set of nodes and \\
                    & parameters $\beta$ \\
$\contextset$       & Set of contexts \\
$\contextset(n)$    & Set of contexts active at node $n$ \\
$\patternset$       & Set of \mutexsets{} \\
$M$                 & A \mutexset{}\\
$p_c(a)$            & Probability of the action label $a$ according to a \\
                    &context predictor $p_c$ \\
$\prodmix(n, a)$    & Product mixing of context predictors at node $n$ \\
                    & for action (edge label) $a$  \\
$\beta$             & Parameters of the context predictors \\
$\betaset$          & $= [\epslow, 0]^{|\contextset|\times A}$, set of all possible parameter \\
                    & values for $\beta$ \\
$B_t$               & Budget used at Bootstrap iteration $t$ \\
$\actionset$        & Set of actions (edge labels) \\
$\actionset(n)$     & Set of edge labels at node $n$, possible actions at $n$ \\
$a$                 & An action, edge label \\
$a(n)$              & Edge label (action) from $\parent(n)$ to $n$ \\
$\regret(\nodeset_t)$ & Regret of the learner compared to the optimal \\
                    & parameters for a set of solution nodes $\nodeset_t$ \\
$\indicator{test}$  &=1 if $test$ is true, 0 otherwise
\end{tabular}

\todo{Check for obsolete notation}

\fi

\end{document}